\newcommand{\comment}[1]{}
\newtheorem{thm}{Theorem}[section]
\newtheorem{lem}[thm]{Lemma}
\theoremstyle{definition}
\newtheorem{defn}{Definition}[section]
\title{\vspace{-2cm}Recovering from Biased Data: Can Fairness Constraints Improve Accuracy?}
\author{ Avrim Blum\thanks{Supported in part by the National Science Foundation under grants CCF-1815011 and CCF-1733556.}\ ,\ \  Kevin Stangl\thanks{Supported in part by the National Science Foundation under grant CCF-1815011.} \\
Toyota Technological Institute at Chicago\\
 Chicago, IL 60637 \\
  \texttt{\{avrim,kevin\}@ttic.edu} \\
}
\begin{document}

\maketitle
\begin{abstract}
\normalsize
Multiple fairness constraints have been proposed in the literature,
 motivated by a range of concerns about how demographic groups
might be treated unfairly by machine learning classifiers. 
In this work we consider a different motivation; learning from biased training data. 
We posit several ways in which training data may be biased, including having a more noisy or negatively biased
labeling process on members of a disadvantaged group, or a 
decreased prevalence of positive or negative examples from the disadvantaged group, or both. 

Given such biased training data, Empirical Risk Minimization (ERM) 
 may produce a classifier that not only is biased but also has suboptimal 
accuracy on the true data distribution.
We examine the ability of fairness-constrained ERM to correct this problem.
In particular, we find that the Equal Opportunity fairness constraint (Hardt, Price, and Srebro 2016)
 combined with ERM will provably recover the Bayes Optimal Classifier under a range of bias models. 
We also consider other recovery methods including re-weighting the training data, Equalized Odds, and Demographic Parity, 
and Calibration.
These theoretical results provide additional motivation for considering fairness interventions even if an actor cares primarily about accuracy.
\end{abstract}

\section{Introduction}
Machine learning (typically supervised learning) systems are automating decisions that affect individuals
in sensitive and high stakes domains such as credit scoring \citep{scoredsociety} and  bail assignment \citep{machinebias,flores}. 
This trend toward greater automation of decisions
has produced concerns that learned models may reflect and  
amplify existing social bias or disparities in the training data. 
Examples of possible bias in learning systems include the Pro-Publica investigation of COMPAS (an actuarial risk instrument) \citep{machinebias}, accuracy disparities in computer vision systems \citep{shades}, and gender bias in word vectors \citep{kalai}. 

In order to address observed disparities in learning systems, an approach that has developed into a significant body of work is to add demographic constraints 
to the learning problem that encode criteria that a fair classifier ought to satisfy. 

Multiple constraints have been proposed in the literature \citep{eodds, dwork12}, each encoding a different type of unfairness one might be concerned about, and there has been substantial work on understanding their relationships to each other, including incompatibilities between the fairness requirements \citep{costfairness, chouldechova, inherent,pleiss2017fairness}. 

In this work, we take a different angle on the question of fairness.  Rather than argue whether or not these demographic constraints encode intrinsically desirable properties of a classifier, 
we instead consider their ability to help a learning algorithm to recover from biased training data and to produce a {\em more accurate} classifier. 

In particular, adding a constraint (such as a fairness constraint) to an optimization problem (such as ERM) would typically 
result in a lower quality solution. However, if the objective being optimized is skewed (e.g., because training data is corrupted or not drawn from the correct distribution) then such constraints might actually help prevent the optimizer from being led astray, with a higher quality solution when accuracy is \textit{measured on the true distribution}. 

More specifically, we consider a binary classification setting in which data points correspond to individuals, some of whom are members of an advantaged Group A and the rest of the individuals are members of a disadvantaged Group B. 
We want to make a decision such as deciding whether to offer a candidate a loan or admission to college.  
We have access to labeled training data consisting of $(x,y)$ pairs where $x$ is some set of features corresponding to an individual and $y$ is a label we want to predict for new individuals. 

The concern is that the training data is potentially biased against Group $B$ 
in that \emph{the training data systematically misrepresents
the true distribution over features and labels in Group $B$}, while the training data for Group $A$ is drawn
from the true distribution for Group $A$. 

We consider several natural ways this might occur.  One way is that members of the disadvantaged group might show up in the training data at a lower rate than their true prevalence in the population, and worse, {\em this rate might depend on their true label}. 

For instance, if the positive examples of Group B appear at a much lower rate in the training data than the negative examples of Group B (which might occur for cultural reasons or due to other options available to them),
then ERM might learn a rule that classifies all or most members of Group B as negative.  

A second form of bias in the training data we consider is bias in the labeling process.  
Human labelers might have inherent biases causing some positive members of Group B in the training data to be mislabeled as negative, which again could cause unconstrained ERM to be more pessimistic than it should be.  Alternatively, both processes might occur together. 

We examine the ability of fairness constraints to help an ERM learning method recover from these problems.

\subsection{Summary of Results}
Our main result is that ERM subject to the \textbf{Equal Opportunity} fairness constraint \citep{eodds} recovers the true Bayes Optimal hypothesis under a wide range of bias models, making it an attractive choice even for decision makers whose overall concern is purely about accuracy on the true data distribution. 


In particular, we assume that under the true data distribution, the Bayes-Optimal classifiers $h_A^*$ and $h_B^*$ classify the same fraction $p$ of their respective populations as positive\footnote{$p=P_{\mathscr{D}_A} (h_{A}^{*}(x) = 1) = P_{\mathscr{D}_B} (h_{B}^{*} (x) = 1)$. 
We will allow the classifiers to make decisions based on group membership or alternatively assume we have sufficiently rich data to implicitly infer the group attribute.},
$h_A^*$ and $h_B^*$ have the same error rate $\eta$ on their respective populations, 
and that these errors are uniformly distributed.  

However, during the training process we do not have access to the true distribution. We only have access to a biased distribution in a way that implicates the distinct social groups and causes the classifier to be overly pessimistic on individuals from Group $B$.

We prove that, subject to the above conditions on $h_{A}^{*}$ and $h_{B}^{*}$, even with substantially corrupted training data either due to the under-representation of positive examples in Group B or a substantial fraction of positive examples in Group B mislabeled as negative, or both, the Equality of Opportunity 
fairness constraint will enable ERM to learn the Bayes Optimal classifier $h^{*}=(h_{A}^{*},h_{B}^{*})$, subject to a pair of inequalities ensuring that the labels are not too noisy and Group $A$ has large mass.

Expressed another way, this means that 
\emph{the lowest error classifier on the biased data satisfying Equality of Opportunity is the Bayes Optimal Classifier on the un-corrupted data.} These results provide additional motivation for considering fairness interventions, and in particular Equality of Opportunity, even if one cares primarily about accuracy.

 Other related fairness notions such as Equalized Odds, Demographic Parity, and Calibration 
 do not succeed in recovering the Bayes Optimal classifier under such broad conditions.
 In fact, we show that given data subject to Under-Representation Bias, Calibration can actually {\em amplify} the effects of the bias, and so can be worse than doing nothing  and instead learning with plain ERM (see Section \ref{sec:underrep}).

Our results are in the infinite sample limit and we suppress issues of sample complexity\footnote{Our notion of sample complexity is typical. 
Let $S$ be the biased training data-set and $ERM_{\mathscr{H}}(S) = \hat{h}$.
Given $\epsilon, \delta > 0$, $m(\epsilon, \delta)$ samples ensures with
probability greater than $1-\delta$ that
$L_{\mathscr{D}}(\hat{h}) \leq L_{\mathscr{D}}(h^*) + \epsilon$.
}
in order to focus on the core phenomenon of the data source being unreliable. 

\subsection{Related Work}
This paper is directly motivated by a model of implicit bias in ranking \citep{implicit}.
In that paper, the training data for a hiring process is systematically corrupted against minority candidates
and a method to correct this bias 
increases both the quality of the accepted candidate and the fraction of hired minority candidates.
However, that fairness intervention, the Rooney Rule, does not immediately translate to a general learning setting. 

Our results avoid triggering the known impossibility results between high accuracy and satisfying fairness criteria 
\citep{chouldechova, inherent} by assuming we have equal base rates across groups. 
This assumption may not be realistic in all settings, however there are settings where bias concerns arise and there is empirical evidence that base rates are equivalent across the relevant demographic groups, e.g. highly differential arrest rates for some alleged crimes that have similar occurrence rates across groups \citep{predictandserve, dirtydata}. 

Within the fairness literature there are several approaches similar to ours. 
In particular, our concern with positive examples not appearing in the training data 
is similar in effect to a selective labels problem \citep{kleinbergselective}. 
\citep{chouldechovaselectivelabels} uses data augmentation to experimentally improve generalization under selective label bias. 

\citep{impossibility, discriminative} also consider the training and test data distribution gap we experience in our model and posit differing interpretations of fairness constraints under different worldviews. 
While we do not explicitly use the terminology in these papers, we believe our view of the gap between the true distribution and the training time distribution is aligned with Friedler et al's concept of the gap between the construct space and the observed space. 

Our second bias model, Labeling Bias, is similar to \citep{labelbias}.
In that paper, the bias phenomenon is that a biased labeler makes poor decisions on the disadvantaged group and intervenes with a reweighting technique, one that is more complex than our Re-Weighting intervention.
However, that paper does not consider the interaction of biased labels with different groups appearing in the data at different rates as a function of their labels.


\section{Model}
In this section we describe our learning model, how bias enters the data-set, and the fairness interventions we consider. 

We assume the data lies in some instance space
$\mathscr{X}$, such as $\mathscr{X} = \mathbb{R}^d$.
There are two demographic groups in the population, Group $A$ and Group $B$.
Their proportions in the population are given by
 $P(x \in A) = 1-r$ and $P(x \in B)= r$ for $r \in (0,1)$.
$x \in A$ can be read as individual $x$ in demographic Group $A$.
Group $B$ is the disadvantaged group that suffers from the effects of the bias model.

Assume there is a special coordinate of the feature vector $x$ that denotes group membership.
The  data distribution is given by 
$\mathscr{D}$, and is a pair distributions $(\mathscr{D}_{A}, \mathscr{D}_B)$,
with $\mathscr{D}_{A}$ determining how $x \in A$ is distributed and $\mathscr{D}_B$ determining how $x \in B$ is distributed.

\subsection{True Label Generation: } \label{truelabels}
 Now we describe how the true labels for individuals are generated. 
Assume there exists a pair of Bayes Optimal Classifiers $h^* = (h_{A}^*, h_{B}^*)$ with $h_{A}^{*}, h_{B}^{*} \in \mathscr{H}: \mathscr{X} \rightarrow \{0,1\}$.

We assume that the  Bayes Optimal classifier $h_B^*$ for Group B may be different from the Bayes Optimal classifier $h_A^*$ for Group A.  
If $h_A^*$ was also optimal for Group B,
then  we can just learn $h^*$ for both Groups $A$ and $B$ using data only from Group $A$ and biased data concerns fade away.
Thus we are learning a pair of classifiers, one for each demographic group. 

When generating samples, first we draw a data-point $x$. 
With probability $1-r$,
 $x \sim \mathscr{D}_{A}$ (and thus $x \in A$)
and with probability $r$, $x \sim \mathscr{D}_{B}$ (so $x \in B$).

Once we have drawn a data-point $x$, we model the true labels as being produced as follows; 
evaluate $h^{*}(x)$, using the classifier corresponding to the demographic group of $x$.
If $x \in A$, then $h^* (x) = h^{*}_{A}(x)$. If $x \in B$, then $h^*(x) = h^{*}_{B}(x)$.
However, we assume that $h^*$ is not perfect and independently with probability $\eta$, the true label of $x$ does
not correspond to the prediction $h^{*}(x)$.
\[y = y(x) = 
\begin{cases}
& \neg \; h^{*}(x) \quad \text{with probability} \quad \eta \\
& h^*(x) \quad \text{ w.p. } \quad 1-\eta \\
\end{cases}
\]
The labels $y$ after this flipping process
are the \textit{true labels} of the training data.\footnote{Note this label model is equivalent to
the Random Classification Noise model \citep{angluin}.
However the key interpretative difference is that in RCN, $h^*(x)$ is the correct label and those that get flipped are noise, but in our case 
the $y$ are the
true labels and $h^*$ is merely the Bayes-Optimal classifier given the observable features.
}
 We assume that $p=P(h^*_{A}(x) = 1 | x \in A) = P(h_{B}^{*}(x) = 1 | x \in B)$.
This combined with the assumption that $\eta$ is the same for classifiers from both groups implies that the two groups
have equal base rates (fraction of positive samples) i.e $p(1-\eta)+(1-p)\eta$.
 
 We denote this label model as $(x,y) \sim P_{\mathscr{D},r}(h^*, \eta)$ for a pair of classifiers
$h^*= (h_{A}^*, h_{B}^*)$ with $h_{A}^*, h_{B}^* \in \mathscr{H}$
where $\mathscr{H}: \mathscr{X} \rightarrow \{ 0,1 \}$ is some hypothesis class with finite VC dimension.

\subsection{Biased Training Data}
Now we consider how bias enters the data-set. 
Consider the example of hiring where the main failure mode will be a classifier that is too negative on the disadvantaged group.
We explore several different bias models to capture potential ways the data-set could become biased. 

The first bias model we call \textbf{Under-Representation Bias}. 
In this model, the positive examples from Group $B$ are under-represented in the training data. 

Specifically, the biased training data is drawn as follows:
\begin{enumerate}
\item $m$ examples are sampled from the distribution $\mathscr{D}$. 
Thus each $x \sim \mathscr{D}$. 
\item The label $y$ for each $x$ is generated according to the 
label process from Section \ref{truelabels} with hypothesis $h^*=(h_{A}^*, h_{B}^*)$ and $\eta$. 
\item For each pair $(x,y)$, if $x \in B$ and $y=1$, then the data-point $(x,y)$ is discarded from our training set independently with probability $1-\beta$. 
\end{enumerate}
Thus we see fewer positive examples from Group $B$ in our training data.
$\beta$ is the probability a positive example from Group $B$ stays in the training data and $  1 > \beta> 0 $.

If $\eta=0$, then the positive and negative regions of $h^{*}$ are strictly disjoint, so if we draw sufficiently many examples,
with high probability, we will see enough positive examples in the positive domain of $h^{*}$ 
to find a low empirical error classifier that is equivalent to $h^*$.\footnote{We would learn with ERM and Uniform Convergence, using the fact that 
$\mathscr{H}$ has finite VC-dimension.}

In contrast for non-zero $\eta$, our label model interacting with the bias model can induce a problematic phenomenon that fools the ERM classifier.
For non-zero $\eta$ there is error even for the Bayes Optimal Classifier $h^*$ and thus
in the positive region of the classifier there are positive examples mixed with negative examples. 
The fraction of negative examples is amplified by the bias process. 

If $\beta$ is sufficiently small, there could in fact be more negative examples of Group B  than positive examples in the positive region of $h_B^*$.  If this occurs, then the bias model will snap the unconstrained ERM optimal hypothesis (optimal on the biased data) to classifying all individuals from Group $B$ as negatives. 
This can be observed in Figure \ref{underrep}.
\begin{figure}
  \centering
  \subfloat[Un-Corrupted Data]{{\includegraphics[width=5cm]{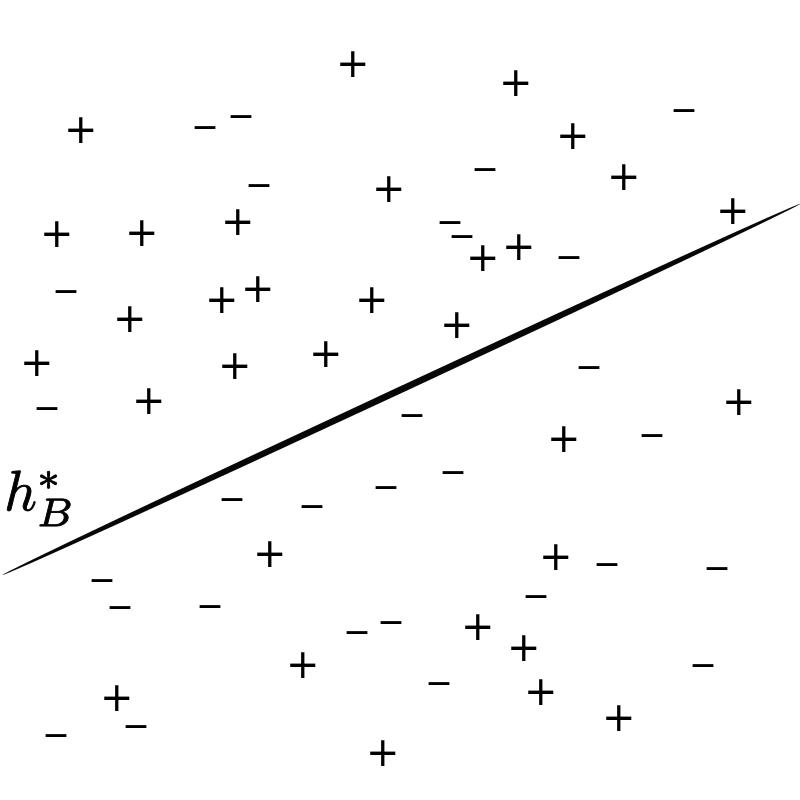}}}
\qquad
\subfloat[Corrupted Data: Under-Representation Bias]{{\includegraphics[width=5cm]{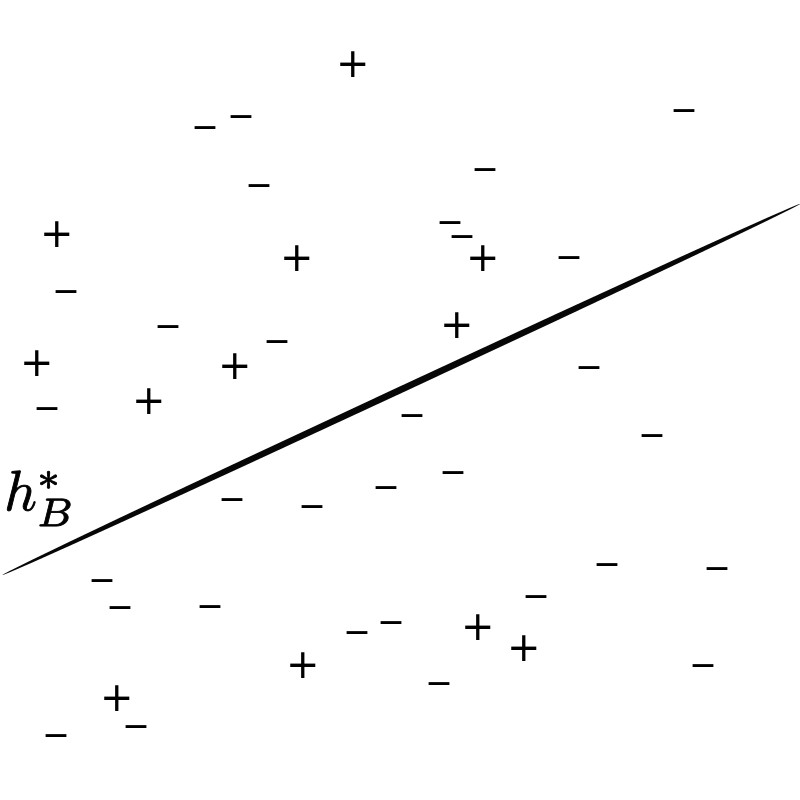}}}
\caption{The schematic on the left displays data points with $p=1/2$, $h^{*}_{B}$ as a hyperplane, and $\eta=1/3$. 
The schematic on the right displays  data drawn from the same distribution subject to the Under-Representation Bias with $\beta_{POS}=1/3$.
Now there are more negative examples than positive examples above the hyperplane so the lowest error hypothesis classifies all examples on the right as negative.}
\label{underrep}
\end{figure}
             
Under-Representation Bias is related to selective labels in \citep{kleinbergselective} since we are learning on a filtered distribution where
the filtering process is correlated with the group label.
Our model is functionally equivalent to over-representing the negatives of the  in the training data, an empirical phenomenon observed in \citep{dirtydata}. 

 \subsection{Alternative Bias Model: Labeling Bias}
We now consider a bias model that captures the notion of implicit bias, which we call \textbf{Labeling Bias.}  In particular, a possible source of bias in machine learning is the label generating process, especially in applications where the sensitive attribute can be inferred by the labeler, consciously or unconsciously.  
For example, training data for an automated resume scoring system could be based upon the historical scores of resumes created by a biased hiring manager or a committee of experts. 
This source of labels could then systematically score individuals from Group $B$ as having lower resume scores, an observation noted in\ randomized real world investigations  \citep{bertrand2004emily}.

Formally, the labeling bias model is:
\begin{enumerate}
\item $m$ examples are sampled from the distribution $\mathscr{D}$. 
Thus each $x \sim \mathscr{D}$. 
\item The labels $y$ for each $x$ are generated according to the 
label process from Section \ref{truelabels} with hypothesis $h^*=(h_{A}^*, h_{B}^*)$ and $\eta$. 
\item For each pair $(x,y)$, if $x \in B$ and $y=1$, then independently with probability $\nu$,  the label of this point is flipped to negative.
\end{enumerate}

This process is one-sided, so true positives become negatives in the biased training data, 
so apparent negatives becomes over-represented.  We are making a conceptual distinction that the \textit{true} labels (Step 2) 
are those generated by the original label model and these examples that get flipped
by the bias process (Step 3) are not really negative, instead they are just mislabeled. 

As $\nu$ increases more and more of the individuals in the minority group
appear negative in the training data.
Once the number of positive samples is smaller than the number of negative samples above the decision surface $h_{B}^*$, then the optimal unconstrained classifier (according to the biased data) is to simply classify all those points as negative.
\subsection{Under-Representation Bias and Labeling Bias} \label{ilfprime}
We now consider a more general model that combines Under-Representation Bias and Labeling Bias, and moreover we allow either positives or negatives of Group B (or both) to be under-represented. 
Specifically, we now have {\em three} parameters: $\beta_{POS}$, $\beta_{NEG}$, and $\nu$. Given $m$ examples drawn from $P_{\mathscr{D},r}(h^*, \eta)$, we discard each positive example of Group B with probability $1-\beta_{POS}$ and discard each negative example of Group B with probability $1-\beta_{NEG}$ to model the Under-Representation Bias. 
Next, each positive example of Group B is mislabeled as negative with probability $\nu$ to model the Labeling Bias. 
Note that the under-representation comes first: $\beta_{POS}$ and $\beta_{NEG}$ represent the probability of {\em true} positive and {\em true} negative examples from Group B staying in the data-set, respectively, regardless of whether they have been mislabeled by the agent's labelers.
\subsection{Fairness Interventions} 
Now we introduce several fairness interventions and define a notion of successful recovery from the biased training distribution. 

We consider multiple fairness constraints to examine whether the criteria have different behavior in different bias regimes.
The fairness constraints we focus on are \textbf{Equal Opportunity, Equalized Odds, and Demographic Parity}.

\begin{defn}
Classifier $h$ satisfies  \textbf{Equal Opportunity} on data distribution $\mathscr{D}$ \citep {eodds} if 
\begin{align}
P_{(x,y) \sim \mathscr{D} }(h(x)=1 | y=1, x \in A) =  P_{(x,y) \sim \mathscr{D} } (h(x)=1 | y=1, x \in B) \label{Equal Opportunity} 
\end{align}
\end{defn}
This requires that the true positive rate in Group $B$ is the same as the true positive rate in Group $A$. 

\textbf{Equalized Odds} is a similar notion, also introduced in \citep{eodds}. In addition to requiring Line \ref{Equal Opportunity}, Equalized 
Odds also requires that the false positive rates are equal across both groups.
Equivalently, we can define \textbf{Equalized Odds}
as $h \perp A | Y$, meaning that $h$ is independent of the sensitive attribute, conditioned on the true label $Y$.
We also consider \textbf{Demographic Parity} := $P(h(x)=1 | x \in A) = P(h(x)=1 | x \in B)$ \citep{dwork12}.

For each of these criteria, the overall training procedure is solving a constrained ERM problem.\footnote{We do not consider methods for efficiently solving the constrained ERM problem.}

We also consider \textbf{data Re-Weighting}, where we 
change the training data distribution to correct for the bias process and then do ERM on the new distribution. 
The overall gist of how the training data becomes biased in our models is that the positive samples from Group $B$
are under-represented in the training data so we can intervene by up-weighting the observed fraction of positives 
in the training data from Group $B$ to match the fraction of positives from the Group $A$ training data. 

In the training process we only have access to samples from the training distribution and thus when
using a fairness criterion to select among models \emph{we check the requirement on the biased training data}.

The last fairness intervention we consider is \textbf{Calibration}. 
Calibration \citep{flores, dieterich2016compas,chouldechova,pleiss2017fairness} requires that when interpreted as probabilities, the same score communicates the same information for individuals from different demographic groups.
Specifically, in the bucket of individuals receiving score $s$, the same fraction in both demographic groups is in fact truly positive. We focus on Calibration for the case of our binary classifier where there are only two scores, e.g. the scores $0$ and $1$, so in order for classifier $h=(h_A, h_B)$ to satisfy Calibration, the following equalities must hold.\footnote{If one of the conditioned-on events never occurs, such as a classifier that never classifies anyone from Group B as positive, we treat the associated equality as satisfied.}
\begin{align*}
& P_{x \sim \mathscr{D}_A}(y=1 | h_A (x)=1 ) = P_{x \sim \mathscr{D}_B}( y = 1 | h_B (x)=1)   \\
& P_{x \sim \mathscr{D}_A}(y=1 | h_A (x)=0 ) = P_{x \sim \mathscr{D}_B}( y = 1 | h_B (x)=0) 
\end{align*}
%
While the other fairness criteria are vigorously debated, Calibration is less contested as an important desiderata of machine learning models. 
Calibration has been used to defend the epistemic validity of risk prediction instruments 
\citep{flores,dieterich2016compas} and it is claimed that mis-calibrated classifiers may have serious harms and 
induce undesirable behavior when scores are used by a human actor \citep{pleiss2017fairness}.

Observe that in our model of label generation, the Bayes Optimal Classifier on the true distribution is the $h^*$ used to generate the labels initially, regardless of the values of $\eta$ and $r$.
Thus our goal for the learning process is to 
recover the original optimal classifier $h^*$, subject to training data from a range of bias models and the true label process with $(x,y) \sim P_{\mathscr{D},r}(h^*, \eta)$.
A more effective learning method would recover $h^*$ in a wider range of the model parameters (the parameters that characterize the bias process and the true label process). 
Accordingly we define \textbf{Strong-Recovery}$(R,\eta)$:
\begin{defn}
A Fairness Intervention in bias model $B$ satisfies 
Strong-Recovery$(r_0 , \eta_0)$ if for all $\eta \in [0,\eta_0)$ and all $0 < r < r_0$, when given data
corrupted by bias model $B$, the training procedure recovers
the Bayes Optimal Classifier $h^*$, given sufficient samples, for all $\beta_{POS}, \beta_{NEG} \in (0,1]$, $\nu \in [0,1)$, and $p \in (0,1]$.
\end{defn}

\section{Recovery Behavior Across Bias Models}
There are two failure modes for learning a fairness constrained classifier that we will need to be concerned with. 
First, the Bayes Optimal Hypothesis may not satisfy the fairness constraint evaluated on the biased data.
Second, within the set of hypotheses satisfying the fairness constraint,
another hypothesis (with higher error on the true distribution) may have lower error than the Bayes Optimal Classifier $h^{*}$ on the biased data.
We now describe how the multiple fairness interventions provably avoid or fail to avoid these pitfalls in increasingly complex bias models. We defer formal proofs to Section \ref{overview}.

\subsection{Under-Representation Bias} \label{sec:underrep}
Equal Opportunity and Equalized Odds both perform well in this bias model and avoid both failure modes, subject to an identical constraint on the bias and demographic parameters.

First, from the definition of the Under-Representation Bias model, observe that $h^*$ satisfies both fairness notions on the biased data, so the first failure mode does not occur.

Second, Equal Opportunity intuitively prevents the failure mode where a hypothesis is produced that appears better than $h^*$ on the biased data, such as classifying all examples from Group $B$ as negative, by forcing the two classifiers to classify the same fraction of positive examples as positive.  So, 
if we classify all the examples from Group B as negative, we have to do the same with Group A, inducing large
error on the training data from the majority Group A. 
In particular, so long as the fraction $r$ of total data from Group B is not too large and $\eta$ is not too close to $1/2$, this will not be a worthwhile trade-off for ERM (saying negative on all samples will not have lower perceived error on the biased data than $h^{*}$) and so it will not produce this outcome.

A formal proof of correctness is given in Section \ref{maintheoremproof}.
Specifically, we prove that Equal Opportunity strongly recovers from  Under-Representation Bias so long as 
\begin{align}
(1-r)(1-2\eta) +r((1-\eta)\beta-\eta) > 0 \label{Equal Opportunityineq}
\end{align}
Note that this is true for all $\eta<1/3$ and $r \in (0,1/2)$, so we have that Equal Opportunity satisfies Strong-Recovery($1/2$,$1/3)$ from 
Under-Representation Bias.  
Alternatively, we see that if $r = 1/4$ then the inequality simplifies to at least $3/4 (1-2 \eta) - \eta/4 = 3/4-(7/4) \eta$ 
so we have Strong-Recovery$(1/4,3/7)$.
Equalized Odds also recovers in this bias model with the same conditions as Equal Opportunity. 

\begin{figure}[ht]
\label{regions}
  \centering
  \includegraphics[width=0.75\textwidth]{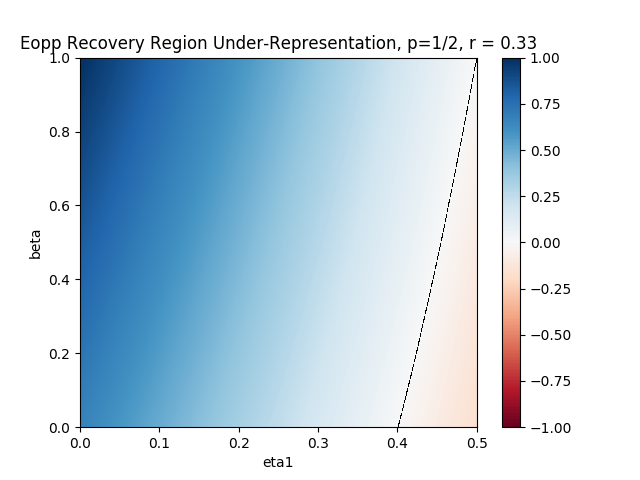}
  \caption{This figure indicates the parameter region such that Equal Opportunity Constrained ERM recovers $h^*$
  under the Under-Representation Bias Model and is a visualization of Equation \ref{Equal Opportunityineq}. $r=1/3$ and $p=1/2$. We label each pair $(\eta, \beta)$ with blue if it satisfies the inequality and red otherwise. This plot shows how smaller $\eta$ means we can recover from lower $\beta$. 
  Blue means $h^*$ is recovered.
  The dashed black line indicates the boundary between recovering $h^*$ and failing to recover $h^*$.}
\end{figure}

In contrast, Demographic Parity fails to recover $h^*$ even if $\eta=0$.
If $p=1/2$, $\eta=0$, and $\beta=1/2$ and we originally had $n$ samples, then
the Bayes Optimal Classifier does not satisfy Demographic Parity on the biased data
since the fraction of samples that will be labelled positive is $\frac{1}{3} \neq \frac{1}{2}$. 

Similarly, if we let 
$\eta \neq 0 , \beta < 1 $,
then in order to match the fraction of positive classifications made by $h_{A}^{*}$,
$h_{B}$ is forced to classify a larger region of the input spaces as positive than $h_{B}^*$ would in the absence of biased data and so we do not 
recover $h_{B}^{*}$.

Another way to intervene in the Under-Representation Bias model
would just be to re-weight
 the training data to account for the under-sampling of positives from Group $B$.
If we really know positives from Group $B$ are under-represented, we can change our objective function 
$ min \sum_{i=1}^{m} I(h(x) \neq y) $
 by changing each indicator function such that minimizing the sum of indicators measures the loss on the true distribution 
 and not the loss on the biased training distribution. 

Define $B^{+} = \{ x \in B \; s.t.\; y=1 \}$. 
Then let, 
\[
I'(h(x),y) = 
\begin{cases}
\frac{1}{\beta} \quad h(x) \neq 1 \quad and \quad x \in B^{+} \\
0 \quad  h(x) = 1 \quad and \quad x \in B^{+} \\
I(h(x) \neq y) \quad otherwise 
\end{cases}
\]
Then we use this new indicator in the objective function.  
This new loss function is an unbiased estimator of the true unbiased risk, so uniform convergence on this estimator will suffice to learn $h^*$.
We can infer the value of $\beta$ from the data for Group A if we know the data from Group B is corrupted by this bias model.  One concern  with re-weighting in general is that the functional form of the correction is tied to the exact bias model.

As we show 
in Section \ref{calib}, Calibration has strange results in this bias model. 
Specifically, when the bias is such that ERM fails to recover $h^*$ (i.e when $(1-\eta)\beta < \eta$), then the Calibration constraint can only be satisfied by a trivial classifier that assigns all of Group $A$ to one label and all of Group $B$ to the alternative label. 
For typical parameters, this will result in Group $B$ being given the negative label and Group $A$ will be
assigned as all positive.
This will not recover $h^*$ and is in fact substantially worse than merely using ERM. 
Un-constrained ERM would learn badly on Group $B$ but would recover $h_{A}^{*}$ for Group $A$. 

When the bias regime is such that $(1-\eta)\beta > \eta$, plain ERM recovers $h^*$, while enforcing Calibration will lead to excess true error 
on both demographic groups over the true error of $h^{*}$.
In particular, satisfying Calibration on the biased data requires
intentionally classifying some negative input space from Group $A$ as positive and classifying some positive input space from Group $B$ as negative.
These results suggest that Calibration is an actively harmful intervention (for both groups) in our model, when compared to plain ERM, across all model parameters.

 In summary, for the Under-Representation Bias model, 
 the fairness interventions Equalized Odds, Equal Opportunity, and Re-Weighting recover $h^*$ under a range of parameters. 
 However, Demographic Parity is inadequate even for $\eta=0$ and will not recover $h^*$ for non-vacuous bias parameters. 

\subsection{Labeling Bias} \label{labelbias}
In Section \ref{overview}, we prove that Equal Opportunity constrained ERM on 
data biased by the Labeling Bias model
 also finds the Bayes-Optimal Classifier, under similar parameter conditions to the previous bias model.

Interestingly, in contrast to Under-Representation Bias, \textit{Labeling Bias cannot be corrected} 
by \textit{Equalized Odds}.  
The problem is the first failure mode. 
For example, consider $\eta =  0$ but where $\nu \neq 0$.
The Bayes Optimal Classifier $h_{A}^{*}$ for Group $A$ has false positive rate of 0 and true positive rate of $1$.
However, since $\nu>0$, there is no classifier for Group $B$ that achieves both of these rates simultaneously. In particular, the only way to classify the negative individuals in the positive region as negative is for the classifier to decrease its true positive rate from $1$. Therefore, Equalized Odds rules out usage of $h_A^*$.  This violation holds for $\eta \neq 0$ as well. 

In contrast, $h^*$ does satisfy Equal Opportunity on the biased data, and given the conditions in Theorem \ref{maintheorem}, it will be the lowest error such classifier on the biased data. 

When \emph{just} Labeling Bias is present, observe that $h^{*}$ still satisfies 
Demographic Parity on the biased data, since in contrast to the Under-Representation Bias case, the positives that are flipped to negative still appear in the training data.
In this case, Demographic Parity will experience strong recovery when $(1-r)*(1-2\eta) + r((1-eta)(1-2\nu)-\eta) > 0$. 
This inequality  is a simple variation of the first inequality in Theorem \ref{maintheorem}, and a simplification of that proof will yield this result, if the only present bias is Labeling Bias.

The Re-Weighting intervention 
is to change the weighting of observed positives in the training data for Group $B$
so that we have the same  fraction of positives in Group $B$ as in Group $A$.
Define $p_{A,1}:= $ the fraction of positive individuals in Group $A$ and $p_{B,1}:=$ the \emph{observed} fraction of positives in $B$ in the biased data. $p_{A,0}$ and $p_{B,0}$ refer to the observed fraction of negative individuals in Group $A$ and Group $B$ in the biased data.

We need a re-weighting factor $Z$ such that:
\begin{align*}
 \frac{p_{A,1}}{p_{A,0}} &= \frac{Z p_{B,1} }{p_{B,0}} \\
  \frac{p_{A,1}}{1-p_{A,1}} &= \frac{Z p_{A,1}(1-\nu)}{p_{A,0} + p_{A,1}\nu} = \frac{Z p_{A,1}(1-\nu)}{1-p_{A,1} + p_{A,1} \nu } \\
 Z &= \frac{1-p_{A,1}(1-\nu) }{(1-\nu)(1-p_{A,1})}  
\end{align*}
We prove in Section \ref{reweightarg} that this correction factor will lead to the positive region of $h_{B}^{*}$ having a higher weight of positive examples than negative examples and simultaneously the negative region of $h_{B}^{*}$ having a higher weight of negative examples than positive examples. 
This causes ERM to learn the optimal hypothesis $h^{*}$.  We can infer the value of $\nu$ by comparing the fraction of positives in Group $A$ and Group $B$. 

In summary, Equal Opportunity, Demographic Parity, and Re-Weighting Interventions recover well in this bias model (Labeling Bias)
while Equalized Odds is inadequate.

\subsection{Under-Representation Bias and Labeling Bias} 
In this most general model that combines the two previous models, Re-Weighting the data is now no longer sufficient to recover the true classifier.  For example, consider the case where $\eta=0$ and $p=1/4$, $\nu=1/2$ and $\beta_{NEG}=1/3$ and $\beta_{POS} = 1 $. 
If there were $n$ points originally from group $B$, 
then in expectation $3n/4$ were negative and $n/4$ were positive.
After the bias process, in expectation there are $n/4$ negatives on the negative side of $h^*$, 
and on the positive side of $h^*$ we have $n/8$ correctly labelled positives and what appear to be $n/8$ negative samples.

The Re-Weighting intervention will not do anything in expectation because the overall fractions are still correct; we have $n/2$ total points
with one quarter of them labeled positive.
ERM is now indifferent between $h^*$ and labeling all samples from Group $B$ as negative. 
If we just slightly increase the parameter $\nu$ and reduce $\beta_{POS}$ then in expectation ERM will strictly prefer labeling
all the samples negatively. 

While the Re-Weighting  method fails, we prove that Equal Opportunity-constrained ERM recovers the Bayes Optimal Classifier $h^*$ as long as we satisfy a condition ensuring that Group A has sufficient mass and the signal is not too noisy.
 As with the previous models, Demographic Parity and Equalized Odds 
are not satisfied by $h^*$ on minimally biased data and so they will not recover the Bayes-Optimal classifier.

\section{Main Results} \label{overview}
We now present our main theorem formally.
Define the biased error of a classifier $h$ as its error rate computed on the biased distribution. 
\begin{thm} \label{maintheorem}
Assume true labels are generated by  $P_{\mathscr{D},r}(h^*, \eta)$ corrupted by both Under-Representation bias and Labeling bias with parameters $\beta_{POS}, \beta_{NEG},\nu$, and assume that
\begin{align}
 (1-r)(1-2\eta) + & r( (1-\eta)\beta_{POS} (1-2\nu) - \eta \beta_{NEG}) > 0 \label{proofc1} \\
& \quad  \text{and} \nonumber \\
(1-r)(1-2\eta) + & r ((1-\eta) \beta_{NEG} -(1-2\nu )\beta_{POS} \eta ) > 0 \label{proofc2}
\end{align}

Then $h^*=(h_{A}^{*}, h_{B}^{*})$ is the lowest biased error 
classifier satisfying Equality of Opportunity on
the biased training distribution and thus $h^*$ is recovered by Equal Opportunity constrained ERM. 

Note $\beta_{POS}, \beta_{NEG} \in (0,1]$, $\nu \in [0,1)$, $\eta \in [0,1/2)$, $r \in (0,1)$ and $p \in (0,1]$.
Condition \ref{maintheorem} refers to Equation \ref{proofc1} and Equation \ref{proofc2}.
\end{thm}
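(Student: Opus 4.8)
The plan is to prove the theorem by the two-failure-mode decomposition emphasized in Section~\ref{overview}: first show that $h^*$ itself satisfies Equal Opportunity on the biased distribution, and second show that every hypothesis $h=(h_A,h_B)$ satisfying Equal Opportunity on the biased data has biased error at least that of $h^*$, with equality only for $h^*$. For the first part, I would observe that the bias process (Under-Representation of positives/negatives and mislabeling of positives, all within Group $B$) only deletes or relabels points whose \emph{true} label is positive in Group $B$; since it acts on $x$ only through group membership and the true label, it does not change which $x$'s lie in the positive region of $h_B^*$. Hence the true-positive rate of $h_B^*$ as measured on the biased distribution — i.e.\ $P(h_B^*(x)=1 \mid \text{biased label}=1, x\in B)$ — needs a short calculation, but the cleaner route is to note Equal Opportunity as used here is checked on the biased data with the biased labels; because $h_B^*$ labels exactly the positive region as $1$ and the surviving ``true positives'' of Group $B$ all lie there, its biased TPR equals the fraction of positives (after noise) in the positive region, which by the uniform-noise assumption equals $1-\eta$ over the relevant normalization, matching Group $A$. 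I would make this precise by writing both sides as ratios of the expected masses and checking they coincide at $p=p$, $\eta=\eta$.

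For the second part — the main work — I would parametrize a competing Equal-Opportunity hypothesis by how its positive region overlaps the positive and negative regions of $h_B^*$ (and symmetrically for $h_A$). Using the uniformity of the $\eta$-noise and the fact that $h_A^*,h_B^*$ each call a $p$-fraction positive with error $\eta$, I would express the biased error of $h$ as a linear functional of these overlap masses. The Equal Opportunity constraint ties the Group-$B$ overlaps to the Group-$A$ overlaps; substituting this in, the ``perceived gain'' on Group $B$ from shaving off apparently-negative mass in the positive region is paid for by a ``loss'' on Group $A$, and the net coefficient is exactly the left-hand side of \eqref{proofc1} (for moving mass out of the positive region / toward the all-negative rule) and of \eqref{proofc2} (for the reverse deformation, moving mass into the positive region). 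The two inequalities assert precisely that both these net coefficients are positive, so any deviation from $h^*$ strictly increases biased error; I would present the bookkeeping as a single signed sum of four terms — $(1-r)(1-2\eta)$ from Group $A$, and the three $r$-weighted Group-$B$ terms $(1-\eta)\beta_{POS}(1-2\nu)$, $\eta\beta_{NEG}$, $(1-\eta)\beta_{NEG}$, $(1-2\nu)\beta_{POS}\eta$ with the signs dictated by which region is losing or gaining mass.

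The hard part will be the careful accounting in the combined bias model: a point that is a true positive of $h_B^*$ contributes to the biased data with weight $(1-\eta)\beta_{POS}(1-\nu)$ as an apparent positive and with weight $(1-\eta)\beta_{POS}\nu$ as an apparent negative, while a true negative of $h_B^*$ in its positive region (a noise point) contributes with weight $\eta\beta_{NEG}$ as an apparent negative, and so on for the negative region; getting every $\beta_{POS},\beta_{NEG},\nu,\eta$ factor attached to the right geometric piece, and confirming that the worst-case deviation is the ``all-negative on $B$'' move (or its mirror), is where an error would most easily creep in. I would therefore first write out a small table of the biased weight of each of the four cells (positive/negative region $\times$ true positive/true negative) for Group $B$, and the two cells for Group $A$, then verify that the coefficient of the only free deformation parameter left after imposing Equal Opportunity reduces to \eqref{proofc1} and \eqref{proofc2}. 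The assumptions $\eta\in[0,1/2)$ and $p\in(0,1]$ enter to guarantee the positive region is nonempty and that $h_B^*$ is genuinely the Bayes optimum (so noise points are the minority in each region before bias), and $r<1$ together with the two displayed inequalities closes the argument.
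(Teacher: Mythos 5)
Your overall plan follows the same route as the paper: first verify that $h^*$ satisfies Equal Opportunity on the biased distribution (the bias acts only through group membership and the true label, uniformly over the input space, so the biased true positive rates of $h_A^*$ and $h_B^*$ coincide --- this is the paper's Lemma~\ref{lem:bayesopt}), then parametrize competitors by the overlap masses with the positive and negative regions of $h_A^*,h_B^*$, write the biased error with a cell-by-cell weight table, and use the two inequalities to rule out the all-negative and all-positive deformations; your matching of \eqref{proofc1} to the all-negative move and \eqref{proofc2} to the all-positive move is correct.

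However, there is a genuine gap in the second part: you assert that after imposing Equal Opportunity there is ``only one free deformation parameter,'' and that the worst case is the all-negative-on-$B$ move or its mirror. In fact a general Equal-Opportunity pair $(h_A,h_B)$ has four overlap parameters $p_{1A},p_{2A},p_{1B},p_{2B}$ (mass of $h^*$-positive region labeled negative by $h$, and of $h^*$-negative region labeled positive by $h$, in each group) tied together by only \emph{one} linear constraint, so three degrees of freedom remain. In particular a classifier may have both $p_{1B}>0$ and $p_{2B}>0$, trading one against the other inside the Equal Opportunity constraint; such deviations do not lie on the segment between $h^*$ and $h^0$ or $h^1$, and the two inequalities alone do not address them. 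The paper closes this with a separate reduction (Lemma~\ref{shrink}): within each group one can shrink $p_{2}$ by $\Delta$ and $p_{1}$ by $f(\Delta)=\Delta\,\eta/(1-\eta)$ simultaneously, which keeps the biased true positive rate fixed and, by a calculation using the uniform noise, the bias weights, and $\eta<1/2$ (but \emph{not} the theorem's inequalities), strictly decreases biased error until one of the two parameters hits zero; a further observation (Lemma~\ref{delta}) shows the surviving nonzero parameter must then be equal across the two groups, after which the error is genuinely linear in a single $\Delta$ and your endpoint comparison applies. Without this reduction step (or an equivalent argument, e.g.\ a careful vertex analysis of the full box-intersect-hyperplane polytope, which has more vertices than $\{h^*,h^0,h^1\}$), the claim that ``any deviation from $h^*$ strictly increases biased error'' is not established.
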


This case contains our other results as special cases and in the next section we prove our main theorem in this bias model.
Note that if Equation \ref{proofc1} is not satisfied then the all-negative hypothesis will have the lowest biased error among hypotheses satisfying
Equal Opportunity on the biased training distribution. 
Similarly, if Equation \ref{proofc2} is not satisfied then the all-positive hypothesis will have the lowest biased error among hypotheses satisfying
Equal Opportunity on the biased training distribution. 
Thus Theorem \ref{maintheorem} is tight. 

To give a feel for the formula in Theorem \ref{maintheorem}, note that the case of small $r$ 
is {\em good} for our intervention, because the advantaged Group $A$ is large enough to pull the classification of the disadvantaged Group $B$ in the right direction. For example, if $r \leq \frac{1}{3}$ then the bounds are satisfied for all $\eta < \frac{1}{4}$ (and if $r \leq \frac{1}{4}$ then the bounds are satisfied for all $\eta < \frac{1}{3}$) for {\em any} under-representation biases $\beta_{POS},\beta_{NEG}>0$ and {\em any} labeling bias $\nu<1$.

Thus, Equal Opportunity Strongly Recovers with $(1/4,1/3)$ and $(1/3,1/4)$ in the Under-Representation and Labeling Bias model. 

Table \ref{summarytab} summarizes the results in the core interventions and the three core bias models.
The contents of each square indicate if recovery is possible in a bias model with an intervention 
and what constraints need to be satisfied for recovery.

\begin{table}
    \begin{tabular}{ |p{2cm}|p{3cm}|p{3cm}|p{3cm}| }
    \hline
	Intervention   & Under-Representation & Labeling Bias & Both   \\ \hline
	Equal Opportunity-ERM & Yes: $(1-r)(1-2\eta) +r((1-\eta)\beta-\eta) > 0$
 & Yes:  $(1-r)(1-2\eta) + r ( (1-\eta)(1-2\nu) -  \eta) > 0 $ & Yes: Using Condition \ref{maintheorem} \\ \hline
	Equalized Odds & Yes: $(1-r)(1-2\eta) +r((1-\eta)\beta-\eta) > 0$ & No & No \\ \hline
	Re-weighting Class B: & Yes & Yes & No \\
    \hline
   Demographic Parity: & No & Yes & No \\
   \hline
    \end{tabular}
    \caption{Summary of recovery behavior of multiple fairness interventions in bias models.}
    \label{summarytab}
\end{table}

\comment{
\begin{figure}[H]
\centering
\begin{minipage}{.5\textwidth}
  \centering
  \includegraphics[width=\textwidth]{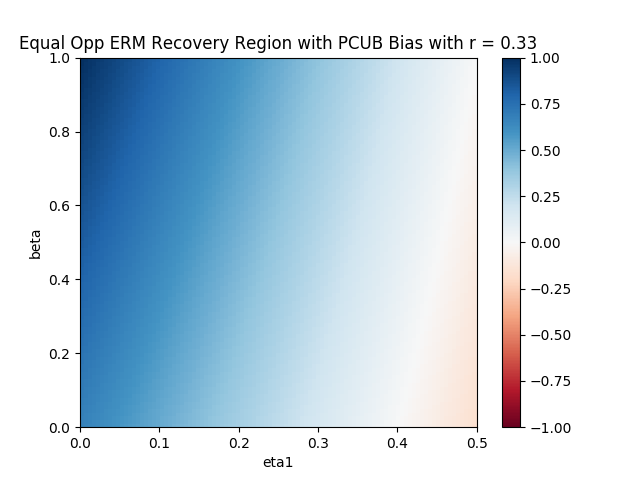}
  \label{fig:test1}
\end{minipage}%
\begin{minipage}{.5\textwidth}
  \centering
  \includegraphics[width=\textwidth]{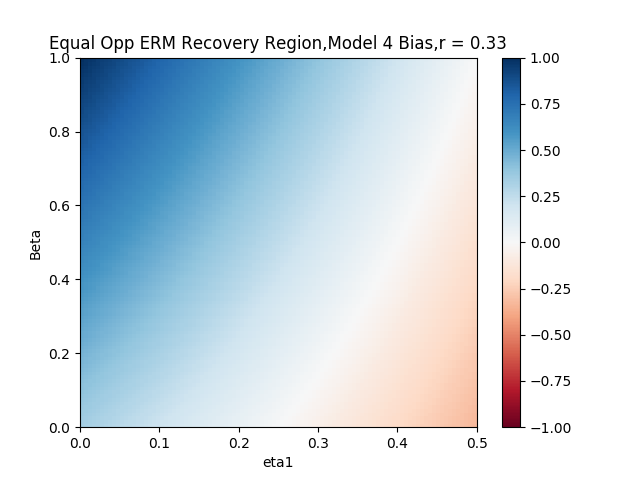}
  \label{fig:test2}
\end{minipage}
\caption{  
The left hand figure is the parameter region where Equal Opportunity ERM recovers $h*$ under the PCUB bias model.
Blue means the inequality is satisfied.
The right hand figure plots the region for EO-ERM but for the Bias Model PCUB+ILF. 
}\end{figure}
}

\subsection{Proof of Main Theorem} \label{maintheoremproof}
In this section we present the proof of the main result, \textbf{Theorem} \ref{maintheorem}.
We want to show that the lowest biased error classifier satisfying Equal Opportunity on the biased data is $h^*$, given Condition \ref{maintheorem}.

The first step of the proof is to show that $h^*$ satisfies Equal Opportunity on the biased training data.
Note: the lemmas and claims here are all in the Under-Representation  Bias combined with Labeling Bias Model, the most general bias model. 
\begin{lem}\label{lem:bayesopt}
$h^*=(h^*_{A}, h_{B}^*)$ satisfies Equal Opportunity on the biased data distribution. 
\end{lem}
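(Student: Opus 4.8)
The plan is to directly compute the true positive rate of $h^*$ on each group under the biased distribution $\mathscr{D}'$ and show the two quantities coincide. Since the training data for Group $A$ is drawn from the true distribution with no corruption, the Group $A$ side is just a computation under $P_{\mathscr{D},r}(h^*,\eta)$: conditioning on $x \in A$, a point has $h_A^*(x)=1$ with probability $p$ and, by the label model, its observed label equals $h^*(x)$ independently with probability $1-\eta$. Hence $P(h_A^*(x)=1, y=1 \mid x\in A) = p(1-\eta)$ and $P(y=1\mid x\in A) = p(1-\eta)+(1-p)\eta$, so the true positive rate of $h_A^*$ on Group $A$ is $\frac{p(1-\eta)}{p(1-\eta)+(1-p)\eta}$.

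For Group $B$, I would track how the three biasing operations act. First observe that in the combined model an example of Group $B$ has \emph{observed} label $y=1$ only if its \emph{true} label was $1$, it survived the under-representation filter (probability $\beta_{POS}$), and it was not flipped by the labeling-bias step (probability $1-\nu$); moreover, conditioned on the true label being $1$, each of these two surviving events occurs with a probability that does not depend on $x$ and in particular not on whether $h_B^*(x)=1$ or $h_B^*(x)=0$ — this is exactly the content of the ``errors are uniformly distributed / flips are independent'' assumption in the label model together with the independence built into the bias model. Consequently,
\begin{align*}
P_{\mathscr{D}'}(h_B^*(x)=1,\ y=1 \mid x\in B) &\ \propto\ p(1-\eta)\,\beta_{POS}(1-\nu),\\
P_{\mathscr{D}'}(h_B^*(x)=0,\ y=1 \mid x\in B) &\ \propto\ (1-p)\eta\,\beta_{POS}(1-\nu),
\end{align*}
with the same proportionality constant. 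The factor $\beta_{POS}(1-\nu)$ therefore cancels in the conditional probability, giving true positive rate $\frac{p(1-\eta)}{p(1-\eta)+(1-p)\eta}$ for $h_B^*$ on Group $B$ as well, which matches Group $A$ and establishes Equation~\eqref{Equal Opportunity} for $h^*$ on $\mathscr{D}'$.

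There is no real obstacle here; the only point requiring care is the bookkeeping that justifies the cancellation, i.e.\ making explicit that conditioning on the observed event $y=1$ is equivalent (for Group $B$ points that appear in the biased data) to conditioning on the true label being $1$, since only true positives can ever be observed as positive, and that the survival/non-flip factors are independent of $h_B^*(x)$ given the true label. One should also note that the conditioning event has positive probability — it does, because $\beta_{POS}\in(0,1]$, $\nu\in[0,1)$, and $p\in(0,1]$ guarantee $p(1-\eta)\beta_{POS}(1-\nu)>0$ whenever $\eta<1/2$ — so the conditional probabilities are well defined. I would present the computation symmetrically for the two groups so the cancellation and the resulting equality are transparent.
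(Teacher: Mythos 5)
Your proposal is correct and follows essentially the same route as the paper: both compute the true positive rate $\frac{p(1-\eta)}{p(1-\eta)+(1-p)\eta}$ for Group $A$, observe that an observed positive in the biased Group $B$ data must be a true positive that survived the filter and escaped the flip, and cancel the common factor $\beta_{POS}(1-\nu)$ from numerator and denominator because the bias acts independently of the point's location relative to $h_B^*$. Your explicit remarks about well-definedness of the conditioning and the one-sidedness of the label flips are slightly more careful than the paper's write-up but add nothing substantively different.
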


\begin{proof}
First, let's consider the easiest case with $\eta=0$, $\beta_{POS}=\beta_{NEG}=1$, and $\nu=0$.  
Recall that $h^*$ is the pair of classifiers used to generate the labels. 
When $\eta=0$, $h^*$ is a perfect classifier for both groups so Equal Opportunity is trivially satisfied.  
Now, let's consider arbitrary $0 \leq \eta < 1/2$.  
Recall that $p=Pr_{D_A}(h_{A}^*(x) = 1 | x \in A) = Pr_{D_B}(h_{B}^*(x) = 1 | x \in B) $.

By our assumption that Group A and Group B have equal values of $p$ and $\eta$ we have 
\[ \Pr(h^*_A(x)=1|Y=1, x \in A) = \frac{p(1-\eta)}{p(1-\eta)+(1-p)\eta} = \Pr(h^*_B (x) =1|Y=1, x \in  B) \]

Next consider when we have both Under-Representation Bias and Labeling Bias.
Recall that $\beta_{POS}, \beta_{NEG}>0$ is the probability that a positive or negative sample from Group $B$ is \emph{not filtered} out of the training data
while $\nu<1$ is the probability a positive label is flipped and this flipping occurs after the filtering process. 
Then,
\begin{align*}
& \{ \text{True Positive Rate on Group A} \} := \Pr(h^*_A(x)=1|Y=1, x \in A) = \\
&\frac{p(1-\eta)}{p(1-\eta)+(1-p)\eta} =
\frac{p(1-\eta)\beta_{POS}(1-\nu)}{p(1-\eta) \beta_{POS}(1-\nu)+(1-p)\eta \beta_{POS}(1-\nu)}   \\
&= \Pr(h^*_B (x) =1|Y=1, x \in  B) := \{ \text{True Positive Rate on Group B} \}
\end{align*}
so Equal Opportunity is still satisfied.  

In words, the bias model removes or flips positive points from Group $B$ independent of their 
location relative to the optimal hypothesis class. 
Thus positive points throughout the input space are
are equally likely to be removed, so the overall probability of true positives being classified as positives is not changed.
\end{proof}



Now we describe how a candidate classifier $h_{B}$ differs from $h^*_{B}$.
We can describe the difference between the classifiers by noting the regions in the input space that each classifier gives a specific label. 
This gives rise to four regions of interest with probability mass as follows:
\begin{align*}
    & p_{1B} = P_{1B}(h_{B}):= P_{x \in \mathscr{D}_B} (h_{B}^{*}(x)=1 \land h_{B} (x) =0) \\
    &  p_{2B} = P_{2B}(h_{B}) := P_{x \in \mathscr{D}_B}(h_{B}^{*}(x)=0 \land h_{B} (x) =1) \\
    & p-p_{1B} = P_{x \in \mathscr{D}_B}(h_{B}^{*}(x)=1 \land h_{B} (x) =1) \\
    & 1-p-p_{2B} = P_{x \in \mathscr{D}_B}(h_{B}^{*}(x)=0 \land h_{B} (x) =0) 
\end{align*}
These probabilities are made with reference to the regions in input space \emph{before} the bias process. 
$p_{1B}$ and $p_{2B}$ are functions of $h_{B}$ to make explicit that there may be multiple hypotheses with different functional forms that could allocate the same amount of probability mass to parts of the input space where $h_{B}^{*}$ and $h_{B}$ agree on labeling as  positive and negative respectively.

The partition of probability mass into these regions is easiest to visualize for hyperplanes but will hold with other hypothesis classes.
$p_{1A}$ and $p_{2A}$ are defined similarly with respect to $h_{A}^*$ and $\mathscr{D}_A$. 
A schematic with hyper-planes is given in Figure \ref{regionsh*}.
\begin{figure}[ht]
  \centering
  \includegraphics[width=0.5\textwidth]{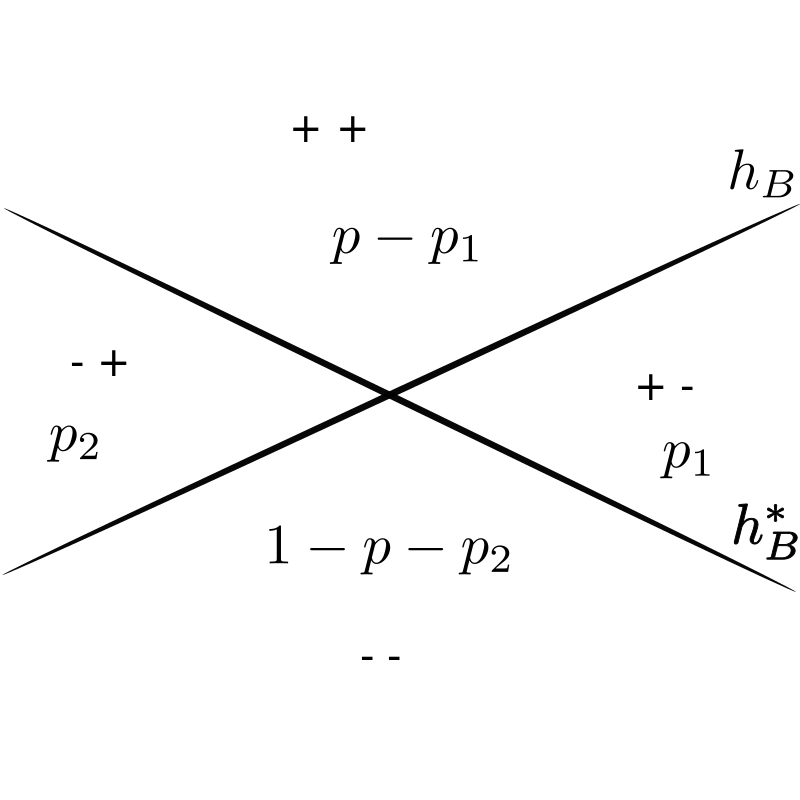}
  \caption{Differences between $h_B$ and $h_{B}^*$ \label{regionsh*}
  measured with probabilities in the true data distribution  (before the effects of the bias model).}
\end{figure}

To show that $h^*$ has the lowest error on the true distribution, we first show how given any a pair of classifiers $h_A$ and $h_B$, which jointly satisfy Equal Opportunity (Equal Opportunity) on the biased distribution, we
can transform $\{ h_{A}, h_{B} \}$ into a pair of classifiers still satisfying Equal Opportunity with at most one non-zero parameter from  $\{p_{1B}, p_{2B} \}$, and at most one non-zero parameter from $\{p_{1A}, p_{2A} \}$, while also not increasing biased error. 

The final step of our proof argues that out of the family of all hypotheses with (1) at most one non-zero parameter for the hypothesis on Group $A$, (2) at most one non-zero parameter for the hypothesis on Group $B$, (3) and jointly satisfying Equal Opportunity on the biased data, $h^*$ has the lowest biased error. 

These steps combined imply that $h^*$ is the lowest biased error hypothesis that satisfies Equal Opportunity. 

\begin{lem}
\label{shrink}
Given a pair of classifiers $h_A$ and $h_B$ which satisfy Equal Opportunity on the biased data 
we can find a pair of classifiers $h_{A}^{'}$ and $h_{B}^{'}$ satisfying 
\begin{enumerate}
    \item At most one of $\{P_{1A}(h_{A}^{'}),P_{2A}(h_{A}^{'})\}$ is non-zero and at most one of  $\{P_{1B}(h_{B}^{'}),P_{2B}(h_{B}^{'}) \}$ is non-zero.
    \item $(h_{A}^{'}, h_{B}^{'} )$ has error at most that of $(h_A, h_B)$ on the biased distribution. 
    \item $h_{A}^{'}$ and $h_{B}^{'}$ satisfy Equal Opportunity. 
\end{enumerate}
\end{lem}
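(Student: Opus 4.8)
The plan is to push everything down to the four scalar parameters $p_{1A},p_{2A},p_{1B},p_{2B}$ describing the symmetric differences of $h_A,h_B$ against $h_A^*,h_B^*$, and to observe that both the biased error and the two biased-data true-positive rates are \emph{affine} in these parameters. First I would use the assumption that $h^*$'s errors are uniformly distributed: in $h_B^*$'s positive region, each unit of $\mathscr{D}_B$-mass contributes, in the biased data, a fixed density $a_{+}:=(1-\eta)\beta_{POS}(1-\nu)$ of apparent positives and $a_{-}:=(1-\eta)\beta_{POS}\nu+\eta\beta_{NEG}$ of apparent negatives, while in $h_B^*$'s negative region the corresponding densities are $b_{+}:=\eta\beta_{POS}(1-\nu)$ and $b_{-}:=\eta\beta_{POS}\nu+(1-\eta)\beta_{NEG}$; for the unbiased Group $A$ these densities are simply $1-\eta$ and $\eta$. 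Summing mistakes region by region, the biased error of $(h_A,h_B)$ equals that of $h^*$ plus a fixed positive constant times $(1-r)(1-2\eta)(p_{1A}+p_{2A}) + r\big(p_{1B}(a_{+}-a_{-})+p_{2B}(b_{-}-b_{+})\big)$, and the biased-data true-positive rate of $h_B$ equals that of $h_B^*$ plus $\big(-p_{1B}a_{+}+p_{2B}b_{+}\big)/\big(pa_{+}+(1-p)b_{+}\big)$ (and analogously for $h_A$, with numerator $-p_{1A}(1-\eta)+p_{2A}\eta$). Since $h^*$ satisfies Equal Opportunity on the biased data by Lemma~\ref{lem:bayesopt}, the Equal Opportunity constraint on $(h_A,h_B)$ is exactly the equality of these two correction terms.

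Next I would carry out the reduction one group at a time; since Equal Opportunity couples the groups only through the pair of biased-data true-positive rates, any modification that holds both of those rates fixed automatically preserves the constraint. For Group $B$: if both $p_{1B}>0$ and $p_{2B}>0$, move $(p_{1B},p_{2B})\mapsto(p_{1B}-tb_{+},\,p_{2B}-ta_{+})$ for $t>0$; this keeps $-p_{1B}a_{+}+p_{2B}b_{+}$ — hence $\mathrm{TPR}_B$, hence Equal Opportunity — fixed, and changes the biased error proportionally to $-tr(a_{+}b_{-}-a_{-}b_{+})$. A short computation gives $a_{+}b_{-}-a_{-}b_{+}=\beta_{POS}\beta_{NEG}(1-\nu)(1-2\eta)>0$, so the biased error strictly decreases; take $t$ as large as possible so that $p_{1B}$ or $p_{2B}$ reaches $0$, and call the result $h_B'$. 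For Group $A$: if both $p_{1A}>0$ and $p_{2A}>0$, move $(p_{1A},p_{2A})\mapsto(p_{1A}-t\eta,\,p_{2A}-t(1-\eta))$; this keeps $-p_{1A}(1-\eta)+p_{2A}\eta$ — hence $\mathrm{TPR}_A$ — fixed while changing the biased error proportionally to $-t(1-r)(1-2\eta)<0$; again push until $p_{1A}$ or $p_{2A}$ hits $0$, giving $h_A'$. The pair $(h_A',h_B')$ then satisfies all three conclusions: at most one nonzero parameter per group by construction, biased error not larger (strictly smaller unless $(h_A,h_B)$ was already in reduced form), and Equal Opportunity preserved throughout.

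The main obstacle is the small determinant identity $a_{+}b_{-}-a_{-}b_{+}=\beta_{POS}\beta_{NEG}(1-\nu)(1-2\eta)$: this is exactly what makes the Group-$B$ shrink direction simultaneously Equal-Opportunity-preserving and error-decreasing, and it is the one place where the structure of the bias model and the parameter ranges $\eta<1/2$, $\nu<1$, $\beta_{POS},\beta_{NEG}>0$ get used. Two secondary points need a remark: the degenerate cases in which some density vanishes (e.g.\ $b_{+}=0$ when $\eta=0$, or $a_{-}=0$ when $\eta=\nu=0$), where the shrink direction becomes axis-aligned but the same computation still drives one of the two parameters to $0$ without increasing error; and the fact that it is legitimate to argue purely in terms of the mass-parameters rather than realizable hypotheses, since the remaining step of the overall proof only needs a lower bound on biased error over the Equal-Opportunity-feasible region, and every move above stays inside that region.
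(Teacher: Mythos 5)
Your proposal is correct and follows essentially the same route as the paper: you shrink the two discrepancy parameters in each group along the direction that holds the biased-data true positive rate fixed (your $(t b_{+}, t a_{+})$ for Group $B$ is exactly the paper's ratio $f(\Delta)=\Delta\,\eta/(1-\eta)$ after the common factor $\beta_{POS}(1-\nu)$ cancels), and push until one parameter hits zero. Your determinant identity $a_{+}b_{-}-a_{-}b_{+}=\beta_{POS}\beta_{NEG}(1-\nu)(1-2\eta)>0$ is just a compact repackaging of the paper's cancellation showing the net error change is negative, so the two arguments coincide in substance.
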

\begin{proof}
We want to exhibit a pair of classifiers with lower biased error that zeros out one of the parameters. 
We do this by modifying each classifier separately, while keeping the true positive rate on the biased data fixed to ensure we  satisfy Equal Opportunity.

First, consider Group $A$ and suppose that $P_{1A}(h_{A}), P_{2A}(h_{A}) > 0 $ since otherwise we do not need to modify $h_A$.
We hold the true positive rate of $h_A$ constant and shrink $p_{2A}$ towards zero.
As we shrink $p_{2A}$, we must shrink $p_{1A}$ towards zero in order hold the true positive rate fixed (and thus satisfy Equal Opportunity). 

The un-normalized\footnote{The normalization factor for these rates for Group $A$ and Group $B$ is the same so this term can be cancelled.} True Positive Rate (constrained by Equal Opportunity) is
$(p-p_{1A})(1-\eta) + p_{2A} \eta = p(1-\eta) -p_{1A}(1-\eta) + p_{2A} \eta   = (p-p_{1B})(1-\eta) + p_{2B} \eta$. 
Since the $p(1-\eta)$ term is independent of the classifier $h_A$, keeping the true positive rate constant is equivalent to keeping $C:= -p_{1A}(1-\eta) + p_{2A} \eta $ constant. 

Let $\Delta$ be the amount we wish to shrink $p_{2A}$ and let $f(\Delta)$ be the amount we must shrink $p_{1A}$ to keep $C$ fixed. 
Then, 
\begin{align*}
& f(\Delta) = \Delta \frac{\eta}{1-\eta}
\end{align*}
We continue shrinking these parameters until either $p_{1A}$ or $p_{2A}$ has hit zero. 
To this see can be always be done, note that $p_{1A} = -C' + p_{2A} \frac{\eta}{1-\eta}$ for $C'= \frac{C}{1-\eta}$. 
Which term hits zero will depend on the sign of $C$. 
 
Observe for Group $A$ this process will clearly reduce training error since we are decreasing both $p_{1A}$ and $p_{2A}$ and the error on group $A$ is monotone increasing (and linear) with respect to $p_{1A}+p_{2A}$. 

We then separately do this same shrinking process for group $B$.
Now we show the biased error decreases for Group $B$. 
\begin{align*}
& \{ \text{Change in Biased Error after Shrinking} P_{2B}  =  \Delta[\eta \beta_{POS} (1-\nu) - (1-\eta) \beta_{NEG} - \eta \beta_{POS} \nu] \\
& \{ \text{Change in Biased Error after Shrinking} P_{1B}  =
f(\Delta)[\eta \beta_{NEG} + (1-\eta)\beta_{POS} \nu - (1-\eta) \beta_{POS}(1-\nu)]
\end{align*}
Thus the overall biased error change for Group $B$ is the sum of the two above terms and simplifies to become
\begin{multline*}
    = \Delta \eta \beta_{POS} (1-\nu) - f(\Delta) (1-\eta) \beta_{POS} (1-\nu)  \\ 
 + \Delta(- (1-\eta) \beta_{NEG} - \eta \beta_{POS} \nu)  + (\Delta)( \eta \beta_{NEG} + (1-\eta) \beta_{POS}\nu)
\end{multline*}
\comment{
\begin{align*}
& = \Delta (\eta \beta_{POS} (1-\nu) - (1-\eta) \beta_{NEG} - \eta \beta_{POS} \nu) +  f(\Delta)(  \eta \beta_{NEG} + (1-\eta)\beta_{POS} \nu  - (1-\eta) \beta_{POS}(1-\nu) ) 
 \\
  & = \Delta \eta \beta_{POS} (1-\nu) - f(\Delta) (1-\eta) \beta_{POS} (1-\nu)   
  + \Delta(- (1-\eta) \beta_{NEG} - \eta \beta_{POS} \nu)  + (\Delta)( \eta \beta_{NEG} + (1-\eta) \beta_{POS}\nu)
 \end{align*}
 }
 The first two terms vanish because of $f(\Delta)= \Delta \frac{\eta}{1-\eta}$.
 \begin{align*}
 & = \Delta(- (1-\eta) \beta_{NEG} - \eta \beta_{POS} \nu) + 
 f(\Delta)( \eta \beta_{NEG} + (1-\eta) \beta_{POS}\nu)\\ 
 & =\Delta(- (1-\eta) \beta_{NEG} - \eta \beta_{POS} \nu) + 
 \Delta\frac{\eta^2}{1-\eta} \beta_{NEG} + \Delta \eta \beta_{POS}\nu\\ 
 &= \Delta ( \frac{\eta^2}{1-\eta} \beta_{NEG} - (1-\eta) \beta_{NEG}) < 0
 \end{align*}
Since this term is negative, we have shown that this modification process decreases error on the biased training data for both Group $A$ and Group $B$ while keeping the true positive rate fixed.
\end{proof}

\begin{lem} \label{delta}
If we have two classifiers $h_A$ and $h_B$ each with one non-zero parameter and the classifiers satisfy the Equal Opportunity constraint, then $p_{1B} = p_{1A} $ and $p_{2B}=p_{2A}$.
\end{lem}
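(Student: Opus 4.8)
The plan is to turn the Equal Opportunity condition, as checked on the biased data, into a single linear equation in the four quantities $p_{1A},p_{2A},p_{1B},p_{2B}$, and then to exploit the hypothesis that each classifier has at most one nonzero parameter to invert that equation. First I would recall the computation from the proof of Lemma~\ref{lem:bayesopt} (and reused inside Lemma~\ref{shrink}): the bias process removes or mislabels points of Group $B$ uniformly over the input space, so the factors $\beta_{POS}$, $\beta_{NEG}$, $\nu$ it contributes to the numerator and denominator of the Group~$B$ true positive rate cancel. Hence, writing $g(p_1,p_2) := -p_1(1-\eta)+p_2\eta$, the Equal Opportunity constraint for the pair $(h_A,h_B)$ on the biased distribution is equivalent to
\[
(p-p_{1A})(1-\eta)+p_{2A}\eta \;=\; (p-p_{1B})(1-\eta)+p_{2B}\eta ,\qquad\text{i.e.}\qquad g(p_{1A},p_{2A})=g(p_{1B},p_{2B})=:c .
\]

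Next I would use the assumption that $(p_{1A},p_{2A})$ and $(p_{1B},p_{2B})$ each have at most one nonzero coordinate, so each lies on one of the two nonnegative coordinate half-axes. The key observation is that, for $\eta\in(0,1/2)$, the map $g$ is injective on this union of half-axes: on the half-axis $p_2=0$ we have $g=-p_1(1-\eta)\le 0$, strictly decreasing in $p_1$ and vanishing only at $p_1=0$; on the half-axis $p_1=0$ we have $g=p_2\eta\ge 0$, strictly increasing in $p_2$ and vanishing only at $p_2=0$; and the two half-axes meet only at the origin. Therefore the sign of the common value $c$ already determines which half-axis both $(p_{1A},p_{2A})$ and $(p_{1B},p_{2B})$ lie on — if $c<0$ then $p_{2A}=p_{2B}=0$, if $c>0$ then $p_{1A}=p_{1B}=0$, and if $c=0$ then all four are zero — and strict monotonicity along that half-axis then forces the surviving coordinates on the two groups to be equal. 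This yields $p_{1A}=p_{1B}$ and $p_{2A}=p_{2B}$.

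The one point that needs care, and the main thing I expect to have to handle separately, is the degenerate case $\eta=0$, where $g$ no longer depends on $p_2$ and injectivity fails along the half-axis $p_1=0$. I would dispose of this case by appealing upstream: when $\eta=0$ the quantity $f(\Delta)=\Delta\eta/(1-\eta)$ from Lemma~\ref{shrink} vanishes, so the shrinking procedure there drives $p_{2A}$ and $p_{2B}$ all the way to zero while strictly decreasing the biased error, after which Equal Opportunity reduces to $p_{1A}=p_{1B}$ and all four parameters again match. Beyond that, the only real work is the bookkeeping that verifies the bias factors genuinely cancel in the Group~$B$ true positive rate (the uniformity-of-bias computation already done in Lemma~\ref{lem:bayesopt}); I expect that normalization bookkeeping, not any conceptual step, to be where one has to be careful.
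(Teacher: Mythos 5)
Your proof is correct and follows essentially the same route as the paper: both reduce Equal Opportunity on the biased data to the single equation $(p-p_{1A})(1-\eta)+p_{2A}\eta=(p-p_{1B})(1-\eta)+p_{2B}\eta$ (the factors $\beta_{POS},\beta_{NEG},\nu$ cancel in Group $B$'s true positive rate) and then read off the conclusion from the at-most-one-nonzero-parameter structure, which is precisely the sign/monotonicity ``inspection'' the paper leaves implicit. Your separate handling of $\eta=0$ is a reasonable extra bit of care (the lemma's conclusion about the $p_2$ coordinates genuinely needs $\eta>0$, a point the paper's terse proof glosses over), but it does not change the approach.
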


\begin{proof}
Recall that the Equal Opportunity constraint requires that these expressions be equal.
\begin{align*}
    & (p-p_{1A})(1-\eta)  + p_{2A} \eta = (p-p_{1B})(1-\eta) + p_{2B} \\
    &  p_{2A} \eta - p_{1A} (1-\eta) =  p_{2B} \eta - p_{1A}(1-\eta)
\end{align*}
Then the theorem follows from inspecting the second equality.  
\end{proof}
This lemma makes explicit that when the classifiers each have only one non-zero parameter and satisfy Equal Opportunity, then the non-zero parameter corresponds to the same region. 

\begin{lem}
Of hypotheses satisfying
 ($p_{1A}=p_{1B}$ and $p_{2A}=p_{2B}=0$) or 
($p_{1A}=p_{1B}=0$ and $p_{2A}=p_{2B}$), if these inequalities hold:
\begin{align*}
 (1-r)(1-2\eta) + & r( (1-\eta)\beta_{POS} (1-2\nu) - \eta \beta_{NEG}) > 0 \\
& \quad  \text{and} \\
 (1-r)(1-2\eta) + & r ((1-\eta) \beta_{NEG} -\eta \beta_{POS} (1-2\nu ) ) > 0 
\end{align*} 
then the lowest biased error classifier satisfying Equal Opportunity on the biased data is $h^*=(h_{A}^{*}, h_{B}^{*})$.
\end{lem}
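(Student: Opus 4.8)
The plan is to combine the two preceding reduction lemmas with a single one-parameter error computation per group. By Lemma \ref{shrink} and Lemma \ref{delta}, in searching for the lowest-biased-error classifier satisfying Equal Opportunity it suffices to consider pairs $(h_A,h_B)$ of one of three forms: $h^*$ itself; the ``Case 1'' family with $p_{2A}=p_{2B}=0$ and $p_{1A}=p_{1B}=t$ for some $t\in(0,p]$; or the ``Case 2'' family with $p_{1A}=p_{1B}=0$ and $p_{2A}=p_{2B}=t$ for some $t\in(0,1-p]$. (The degenerate value $\eta=0$, where Lemma \ref{delta} no longer forces $p_{2A}=p_{2B}$, I would dispatch separately: there $h^*$ is perfect on both groups, as in Lemma \ref{lem:bayesopt}, and the Group B error mass is already strictly increasing in any added parameter.) Since the biased distribution is obtained from the draw-then-filter measure by dividing by a normalization constant that does not depend on the classifier, I would compare the unnormalized biased error mass $E(h)=(1-r)E_A(h)+r\,E_B(h)$, where Group A contributes its ordinary error on $\mathscr{D}_A$ (Group A is untouched by the bias model) and $E_B$ is the Group B error mass after under-representation filtering and label flipping.

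Next I would evaluate $E(h)-E(h^*)$ as a function of $t$ in each case, reusing the per-region increments already computed inside the proof of Lemma \ref{shrink}. In Case 1, $h$ equals $h^*$ except on a region $R$ of true-distribution mass $t$ inside the positive region of $h_B^*$ (resp. $h_A^*$), on which $h$ outputs $0$; by uniformity of the errors, $R$ holds true-positive mass $(1-\eta)t$ and true-negative mass $\eta t$. On Group A this contributes $(1-2\eta)t$ to the error; on Group B the surviving part of $R$ has observed-positive mass $(1-\eta)t\,\beta_{POS}(1-\nu)$ and observed-negative mass $(1-\eta)t\,\beta_{POS}\nu+\eta t\,\beta_{NEG}$, and re-labelling these points from $1$ to $0$ changes the Group B error mass by $t\big((1-\eta)\beta_{POS}(1-2\nu)-\eta\beta_{NEG}\big)$. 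Hence $E(h)-E(h^*)$ equals $t$ times the left-hand side of (\ref{proofc1}), which is strictly positive for $t>0$ by hypothesis. Case 2 is the mirror image --- $R$ now lies in the negative region of $h_B^*$ with true-positive mass $\eta t$ and true-negative mass $(1-\eta)t$ --- and gives $E(h)-E(h^*)$ equal to $t$ times the left-hand side of (\ref{proofc2}), again strictly positive for $t>0$.

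Putting the cases together shows that every Equal-Opportunity-feasible classifier of the reduced form has biased error at least that of $h^*$, with equality only at $h^*$; since Lemma \ref{shrink} guarantees the reduction to this form never raises biased error and Lemma \ref{lem:bayesopt} guarantees $h^*$ is feasible, $h^*$ is the unique minimizer of biased error subject to Equal Opportunity.

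I expect the main obstacle to be the careful bookkeeping of how the region $R$ of Group B is transformed by the composition of the two bias operations --- under-representation acts by true label (keeping true positives with probability $\beta_{POS}$ and true negatives with probability $\beta_{NEG}$) while the labeling flip acts only on the surviving true positives --- and, relatedly, verifying that the normalization constant of the biased distribution really is classifier-independent so that unnormalized error masses may be compared. The algebra itself is linear in $t$ and collapses exactly onto (\ref{proofc1}) and (\ref{proofc2}), which also makes the tightness remark after Theorem \ref{maintheorem} transparent: when (\ref{proofc1}) fails, Case 1 with $t=p$ (the all-negative classifier, matched on both groups) beats $h^*$, and symmetrically with $t=1-p$ and the all-positive classifier when (\ref{proofc2}) fails.
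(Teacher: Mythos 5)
Your proposal is correct and follows essentially the same route as the paper: reduce to the matched one-parameter families via Lemmas \ref{shrink} and \ref{delta}, then observe that the biased error is linear in the single parameter with the sign of the slope governed exactly by the two assumed inequalities (the paper phrases this as writing the error as a convex combination of $err(h^*)$ and $err(h^0)$ or $err(h^1)$ and comparing endpoints, which is the same computation as your direct slope calculation using the region masses).
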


\begin{proof}
First, we sketch the proof informally. 
Consider three cases which depend on how the bias process affects the unconstrained optimum for Group $B$ on the biased data. 
In the first case, in the biased data distribution, the region $X^{+} :=  \{ x \; s.t. \;  h_{B}^{*} (x) = 1 \}$ has more positive than negative samples in expectation and the region $X^{-} :=  \{ x \; s.t. \;  h_{B}^{*} (x) = 0 \}$ has more negative than positive samples in expectation.
 In the second case, there are more positive than negative samples throughout the entire input space  in the biased distribution.
 In the third and final case, there are more negative than positive samples throughout the input space in the biased distribution. 

In these three cases, the optimal hypothesis is exactly one of $\{ h_{B}^*, h_{B}^{1}, h_{B}^{0} \}$, respectively.  
The second two hypotheses mean labelling all inputs as positive and labelling all inputs as negative, respectively.
These three hypotheses correspond to hypotheses with at most one non-zero parameter. 

For instance,  $h_{B}^{1}$ occurs when $p_{2B}=1-p$ and $p_{1B}=0$.
Each of the three hypotheses occur when the one non-zero parameter attains a location on the boundary of its range of values.
When $p_{2B}$ is allowed to be non-zero, if instead $p_{2B}=0$ (and thus it also must be that $p_{1B}=0$), the hypothesis is equivalent to $h_{B}^{*}$. 
A similar relationship holds for $h^{0}$ and $p_{1}$. 

In order to show the theorem, we prove that if $h^*$ has lower biased error than $h^{1}=(h_{A}^{1}, h_{B}^{1})$ and $h^{0}=(h_{A}^{0}, h_{B}^{0})$ on the biased data distribution, then $h^{*}$ has the lowest error among all hypotheses with at most one non-zero parameter and satisfying Equal Opportunity.

To see this, consider $h_A$ and $h_B$ with the same non-zero parameter equal to $\Delta$.
Then the error of $h_A$ is a linear function of $\Delta$. 
Similarly, the error of $h_B$ is a linear function of $\Delta$.
The overall error of $h=(h_A, h_B)$ is a weighted combination of  the error of $h^*$ and the error of $h^{0}$ or $h^{1}$,
so the overall error of $h$ is thus linear in $\Delta$, so the optimal hypothesis parametrized by $\Delta$ must occur on the boundaries of the region of $\Delta$, so the optimal hypothesis is one of $\{h^{*} , h^{0}, h^{1} \}$.
We then show that the inequalities we assume in the theorem enforce that $h^*$ has strictly lower error than
$h^{0}$ or $h^{1}$. 
Formally, we enumerate the possible events: 
\begin{center}
    \begin{tabular}{ |l | l | l | l | p{5cm} |}
    \hline
      Type & Sign of  $h^*$ &  Label in Biased Data & Un-Normalized Probability of Event \\ \hline
      A    & +    & +     & $R_1 = (1-r)p(1-\eta)$ \\ \hline 
      A    & +    & -     & $R_2 = (1-r)p\eta$ \\ \hline 
      A    & -    & +     & $R_3 = (1-r)(1-p)\eta$ \\ \hline 
      A    & -    & -     & $R_4 = (1-r)(1-p)(1-\eta)$ \\ \hline 
      B    & +    & +     & $R_5 = rp(1-\eta)\beta_{POS}(1-\nu)$ \\ \hline 
      B    & +    & -     & $R_6 = rp[(1-\eta)\beta_{POS}\nu+\eta \beta_{NEG}]$ \\ \hline 
      B    & -    & +     & $R_7 = r(1-p)(\eta \beta_{POS})(1-\nu)$ \\ \hline 
      B    & -    & -     & $R_8 = r(1-p)[(1-\eta)\beta_{NEG} + \eta \beta_{POS} \nu ]$ \\ \hline 
    \end{tabular}
\end{center}
The probabilities on the far right hand side are not normalized. 
First we show that the $err(h^*) < err(h^{1})$.
$err(h^*)= R_{2}+R_{3}+R_{6}+R_{7}$ and $err(h^{1})=R_{2}+R_{4}+R_{6}+R_{8}$, thus
$err(h^*) < err(h^1)$ if and only if $R_{3} + R_{7} < R_{4} + R_{8}$ or thus if
\begin{multline*}
 (1-r)(1-p)\eta + r(1-p)(\eta \beta_{POS})(1-\nu)  \\ 
 < (1-r)(1-p)(1-\eta) + r(1-p)[(1-\eta)\beta_{NEG} + \eta \beta_{POS} \nu ] 
\end{multline*}
Equivalently,
\begin{align}
& 0 < (1-r)(1-2\eta) + r [(1-\eta)\beta_{NEG} - \eta \beta_{POS} (1-2\nu) ] \label{const1}
\end{align}
Now we consider $h^*$ compared to $h^{0}$.
Then  
$err(h^{0})=R_{1}+R_{3}+R_{5}+R_{7}$
Then $err(h^*) < err(h^{0})$ if and only if $R_{2} + R_{6} <R_{1} + R_{5}$. 
\begin{align*}
 (1-r)p\eta+ rp[(1-\eta)\beta_{POS}\nu+\eta \beta_{NEG} ] < (1-r)p(1-\eta) + rp(1-\eta)\beta_{POS}(1-\nu) 
 \end{align*}
 Equivalently,
 \begin{align}
 0 < (1-r)(1-2\eta) + r((1-\eta)\beta_{POS}(1-2\nu) - \eta \beta_{NEG}) \label{const2}
\end{align}
Thus we have shown that the error of $h^*$ is less than the error of of $h^{1}$ and
$h^{0}$ if and only if both Lines \ref{const1} and \ref{const2} are true, which we
assume in our theorem.

Now we show that we error of $h=(h_A, h_B)$ is linear in $\Delta$.
There are two cases depending on what parameter of $h$ is non-zero.

Let $h$ be a hypothesis such that $P_{1A}(h_{A} ) = p_{1B} = \Delta$ and $P_{2A}(h_A ) = p_{2B} = 0$ and
$\Delta \in [0,p]$.
\begin{align*}
& err(h) = R_1 \frac{\Delta}{p} + R_{2} \frac{p-\Delta}{p}+R_{3} + R_5 \frac{\Delta}{p} + R_6 \frac{p - \Delta}{p} +R_7 \\
& = \frac{\Delta}{p} err(h^{0}) + \frac{p-\Delta}{p} err(h^*) 
\end{align*}
On the other case let $P_{1A}(h_{A} ) = p_{1B} = 0$ and $P_{2A}(h_A ) = p_{2B} = \Delta$ and $\Delta \in [0,1-p]$.
\begin{align*}
    & err(h)=R_2 +  \frac{1-p-\Delta}{1-p}R_3 +\frac{\Delta}{1-p} R_4 + R_6 + \frac{1-p-\Delta}{1-p}R_7  + \frac{\Delta}{1-p} R_8 \\
    & =\frac{\Delta}{1-p} err(h^{1}) + \frac{1-p-\Delta}{1-p} err(h^*) \\
\end{align*}
Thus the error of $h$ is linear in $\Delta$ and boundary values for $\Delta$ correspond to the hypotheses in $\{ h^{*}, h^{0}, h^{1} \}$.
These two arguments show that:
\begin{enumerate}
    \item Any single parameter $h$ is a weighted sum of ($h^*$ and $h^{0}$ ) or is a weighted sum of ($h^*$ and $h^{1}$) and so is linear in $\Delta$. 
    The boundary values of $\Delta$ correspond to $\{h^{*}, h^{0}, h^{1} \}$.
\item Since the optimal value of a linear function occurs on the boundaries of its range, the optimal Equal Opportunity classifier with at most one non-zero parameter is one of $\{h^{*}, h^{0}, h^{1} \}$.
\item  The inequalities in the theorem statement enforce that $h^*$ has lower biased error than either $h^0$ or $h^1$, so $h^*$ has the lowest biased error of any single parameter hypothesis satisfying Equal Opportunity. 
\end{enumerate}
\end{proof}
If the conditions in the Theorem \emph{do not hold}, then $h^{*}$ will not have lower error than $h^{0}$ and $h^{1}$.  

\subsection{Verification Re-Weighting Recovers from Labeling Bias}  \label{reweightarg}

The way we intervene by Reweighting is we multiply the loss term for mis-classifying positive examples in Group $B$ by a factor $Z$  such that the weighted fraction of positive examples in biased data for Group $B$ is the same as the overall fraction of positive examples in Group $A$. 

The goal of this reweighting is to ensure that the ratio of positive to negative samples
in the positive region of $h_{B}^*$ is greater than $1$ while the ratio is less than $1$ in the negative region of $h_{B}^*$.
Thus the re-weighted probabilities need to simultaneously satisfy:
\begin{align*}
& \frac{P(y=1|h_{B}^*(x)=1)}{P(y=0 | h_{B}^*(x) =1 ) } = \frac{Z[(1-\eta )(1-\nu )] }{ (\eta + (1-\eta)\nu)}  > 1   \\
& \frac{P(y=1 |h_{B}^*(x)= 0 )}{P(y= | h_{B}^*(x) = 0  ) } = \frac{Z[\eta (1-\nu)]}{((1-\eta) + \eta \nu)} < 1\\
\end{align*}
 The two constraints are equivalent to requiring that:
\begin{align} \label{ilfweight}
& \frac{\eta + (1-\eta) \nu }{ (1-\eta)(1-\nu)} < Z < \frac{1-\eta + \eta \nu }{\eta (1-\nu)} 
\end{align} 
Recall from Section \ref{labelbias} that  $Z = \frac{1-P_{A,1}(1-\nu)}{(1-\nu)(1-P_{A,1})}$

First we show the right hand inequality.
\begin{align*}
    &\frac{1-p_{A,1}(1-\nu)}{(1-\nu)(1-p_{A,1})} < \frac{1-\eta + \eta \nu }{\eta (1-\nu)} \\
    & 0 <  \frac{1-\eta + \eta \nu }{\eta } - \frac{1-p_{A,1}(1-\nu)}{(1-p_{A,1})} \\
\end{align*}
Observe that both terms are linear in $\nu$. When $\nu = 0$, the inequality becomes $\frac{1-\eta}{\eta}- \frac{1-p_{A,1}}{1-p_{A,1}} = \frac{1-\eta}{\eta} - 1 > 0 $. 
In our bias model $\nu \in [0,1)$, but if  $\nu = 1$, the inequality becomes $\frac{1}{\eta}-\frac{1}{1-p_{A,1}} > 0 $.
Thus Line \ref{ilfweight} holds if both $\frac{1-\eta}{\eta} -1 > 0$ and
$\frac{1}{\eta}-\frac{1}{1-p_{A,1}} > 0$. 

$\frac{1-\eta}{\eta} -1 > 0$ is clearly true because $0 < \eta < 1/2 $.

To see that $\frac{1}{\eta}-\frac{1}{1-p_{A,1}} > 0$, note that this is equivalent to $\eta<1-p_{A,1}$, where the right-hand-side is the overall fraction of negative examples in $A$.  This is clearly true because the positive region of $h_A^*$ has exactly an $\eta$ fraction of negatives, and the negative region of $h_A^*$ has a $1-\eta>\eta$ fraction of negatives.
\comment{
Observe if $\eta < 1-p_{A,1}$, then $\frac{1}{\eta}-\frac{1}{1-p_{A,1}} > 0 $. 
To see that $\eta < 1 - p_{A,1}$:
\begin{align*}
& \eta < 1 - p_{A,1} = 1-(p(1-\eta)+(1-p)\eta) = 1- (p+\eta -2p\eta) = 1- p - \eta +2p\eta \\
& \eta < 1-\eta + p(2\eta-1) \\
& p(1-2\eta) < 1-2\eta
\end{align*}
The final inequality clearly holds since $p \in (0,1]$. 

Thus we have shown that the right hand inequality in Line \ref{ilfweight}
is satisfied for $\nu$ on its boundaries and  since the inequality is linear in 
$\nu$, clearly the inequality holds for for $\nu \in [0,1)$. 
Thus this constraint is satisfied for all parameters in our bias model.
}

Now we show the left hand inequality in Line \ref{ilfweight}. 
\begin{align} 
& \frac{\eta + (1-\eta) \nu }{ (1-\eta)(1-\nu)} < \frac{1-P_{A,1}(1-\nu)}{(1-\nu)(1-P_{A,1})} \nonumber \\
& \frac{\eta + (1-\eta) \nu }{ (1-\eta)} < \frac{1-P_{A,1}(1-\nu)}{1-P_{A,1}} \nonumber \\
&  0 < \frac{1-P_{A,1}(1-\nu)}{(1-P_{A,1})} - \frac{\eta + (1-\eta) \nu }{ (1-\eta)} \label{linear}
\end{align} 
We follow a similar linearity argument to above.
For $\nu=1$, Line \ref{linear} becomes $\frac{1}{1-p_{A,1}} - \frac{1}{1-\eta} > 0$.
This holds if $1-p_{A,1} < 1-\eta \iff \eta < p_{A,1}$.  This is clearly true because the negative region of $h_A^*$ has exactly an $\eta$ fraction of positives, and the positive region of $h_A^*$ has a $1-\eta>\eta$ fraction of positives.
For $\nu=0$, Line \ref{linear} becomes $1-\frac{\eta}{1-\eta} >0 $ which holds since $0< \eta < 1/2$.
\comment{
\begin{align*}
    & \eta < p_{A,1} = p(1-\eta) + (1-p) \eta = p - p\eta + \eta - p\eta \\
    & 0 < p-2p\eta
\end{align*}
The last line holds since $\eta < 1 /2$.
Observe that the left hand constraint
$ \frac{(1-\eta) (1-\nu) }{ \eta + (1-\eta) \nu } < 0 $ since $ Z > 0$. 
However since $\eta  \in (0,1/2)$ and $\nu \in [0,1]$ as long as the 
right hand side is well defined (i.e. $\eta  \neq 0$) and $\nu \neq 1$ then this requirement is satisfied.
Note that $Z$ from Section \ref{labelbias} i.e. $Z = \frac{1-P_{A,+}(1-\nu)}{(1-\nu)(1-P_{A,+})}$ satisfies these constraints-this is consistent with our figures.
\begin{align*}
&  \frac{1-\eta + \eta \nu }{\eta (1-\nu)}  > \frac{\eta + (1-\eta) \nu }{ (1-\eta)(1-\nu)} 
\end{align*}

\begin{align*}
  \frac{1-\eta + \eta \nu }{\eta (1-\nu)} & > \frac{\eta + (1-\eta) \nu }{ (1-\eta)(1-\nu)} \\
((1-\eta) + \eta \nu )(1-\eta) & > \eta (\eta + (1-\eta)\nu ) \\
    1-\eta + \eta \nu - \eta +\eta^{2} - \eta^2 \nu & > \eta^2 + \eta \nu - \eta^{2} \nu \\
 1- 2 \eta + \eta^2 & > \eta^2 \\
 1- 2 \eta & > 0 
\end{align*}
This always holds so we can always find the correct $Z$ satisfying both constraints. 
Thus to find the optimal $Z$ we can simply increment $Z$ from $0$ until the classifier snaps out of 
saying $0$ on all samples from the dis-advantaged group and before it snaps into saying $1$ for all
samples on the advantaged group. 
}
\section{Calibration Results} \label{calib}
\begin{thm} \label{calibinformal}
Assume the training data is corrupted by Under-Representation Bias with parameter $\beta < 1$.
For any such $\beta$, $h^*$ does not satisfy Calibration on the biased data and
thus Calibration constrained ERM will return a hypothesis that has strictly worse true error than the true error of $h^*$.
This occurs even when $(1-\eta) \beta > \eta $, i.e. in the bias regime such that plain ERM on the biased data would recover $h^*$.

Moreover, if bias is such that $(1-\eta)\beta < \eta$ and thus ERM on the biased data will not recover $h^*$, then the unique ERM solution that satisfies Calibration on the biased data is a trivial classifier, meaning that all individuals from Group $A$ receive one label (the positive label) and all individuals from Group $B$ receive the opposite label.
\end{thm}
\begin{proof}
Recall that Calibration of hypothesis $h=(h_A, h_B)$ requires that both Eq. \ref{calp} and \ref{caln} hold simultaneously.
\begin{align}
& P_{x \sim \mathscr{D}_A}(y=1 | h_A (x)=1 ) = P_{x \sim \mathscr{D}_B}( y = 1 | h_B (x)=1)  \label{calp} \\
& P_{x \sim \mathscr{D}_A}(y=1 | h_A (x)=0 ) = P_{x \sim \mathscr{D}_B}( y = 1 | h_B (x)=0) \label{caln}
\end{align}
We assume that if one of the terms is vacuous in the Calibration constraints 
, then that constraint is still satisfied. 
In other words, if one bin is non-empty for one group while the corresponding bin for the other group is empty, we assume that bin satisfies Calibration. 
Due to the effects of the bias model positive samples from Group $B$ appear in the training data with lowered frequency
and so the equalities in Equations \ref{calp} and \ref{caln} become: 
\begin{align}
& P_{x \sim A}(y=1 | h_{A}^{*} (x)=1 ) > P_{x \sim B}( y = 1 | h_{B}^{*} (x)=1)  \label{cal1} \\
& P_{x \sim A}(y=1 | h_{A}^{*} (x)=0 ) > P_{x \sim B}( y = 1 | h_{B}^{*} (x)=0) \label{cal2}
\end{align}
Thus $h^*  = ( h_{A}^*, h_{B}^{*})$ violates calibration for any $\beta < 1$ and any other hypothesis satisfying calibration will have strictly greater 
error on the true data distribution.
Intuitively, for $h$ to be Calibrated it will need to reduce the left-hand side of Equation \ref{cal1} because
it cannot increase the right-hand side and will have to increase the right-hand side of Equation \ref{cal2} because it cannot decrease the left-hand side.
As a result, its true error will be strictly larger than that of $h^*$.

Now, consider $(1-\eta)\beta < \eta$.
In this case, plain ERM will not recover $h^*$.
With this amount of bias, then:
\[ P_{x\sim A} (y = 1 | h_{A}^{*}(x) =1 ) > P_{x\sim A} (y = 1 | h_{A}^{*}(x) =0  ) >  P_{x\sim B} (y = 1 | h_{B}^{*}(x) =1 ) 
> P_{x\sim B} (y = 1 | h_{B}^{*}(x) =10)  \]

Satisfying Calibration with non-trivial classifiers requires achieving an equality with one side being a non-negative combination of the first two probabilities, and the other side being a non-negative combination of the second two probabilities.
Since these inequalities are all strict, this is clearly not possible, so the only way to satisfy calibration is to use a trivial classifier that assigns all of Group $A$ to one label, and all of Group $B$ to the other label.\footnote{Which trivial classifier is selected by ERM will depend on $p$ and $r$. If $1-r > r$ and $p > 1/2$, then Group $A$ will be all positive and Group $B$ all negative. While if $1-r > r $ and $p< 1/2$, then then Group $A$ will be all positive and Group $B$ all negative.}
\end{proof}

\comment{
\begin{thm} \label{calibformal}
Assume the training data is corrupted by data subject to Under-Representation Bias with parameter $\beta$.                                                                                   
If $(1-\eta)\beta > \eta$ and a non-trivial classifier satisfying Calibration exists (vanilla ERM will recover $h^*$), then excess true error 
(excess over true error of $h^*$) of the ERM solution satisfying Calibration will be  
\[ (1-r) (1-p)v_A  + r p[\frac{(1-p)^2}{p^2} v_{A} \frac{(1-\eta)\beta + \eta }{\eta \beta + (1-\eta)} ] \]
where \[ v_A =\frac{p}{1-p}\frac{(1-\eta) \eta (1-\beta)}{ \beta (1-\eta)^2 - \eta^2 } \]

If $(1-\eta)\beta < \eta$, then the only ERM solution satisfying Calibration are the two trivial solutions that label all samples from Group $A$ as positive and label all samples from Group $B$ as negative (or vice-versa).
\end{thm}
Observe that as $\beta(1-\eta) - \eta$ approaches zero from the positive direction, the error of the above Calibration satisfying classifier approaches the excess error of trivial classifier. This is even while plain ERM would recover $h^*$.
\begin{proof}
Define $z_A$ such that $z_A := P_{x \sim \mathscr{D}_A} (h_A (x) =1 |  h_{A}^{*}(x) = 1 )$, e.g. the fraction of the  positive region of $h_{A}^{*}$ that our classifier $h_A$ correctly classifies as positive.
(in the previous notation $z_A$ is $\frac{p-p_{1A}}{p}$.
Alternatively, let $v_A := P_{x \sim \mathscr{D}_B} (h_{A}(x) = 1| h_{A}^{*}(x) = 0 )$. 
Let $z_B, v_B$, be defined similarly.
These parameters determine whether classifiers $h_A, h_B$ satisfy calibration.

Recall that Calibration requires that both Eq. \ref{calp} and \ref{caln} hold simultaneously.
\comment{
\begin{align}
& P_{x \sim \mathscr{D}_A}(y=1 | h_A (x)=1 ) = P_{x \sim \mathscr{D}_B}( y = 1 | h_B (x)=1)  \label{calp} \\
& P_{x \sim \mathscr{D}_A}(y=1 | h_A (x)=0 ) = P_{x \sim \mathscr{D}_B}( y = 1 | h_B (x)=0) \label{caln}
\end{align}
We assume that if one of the terms is vacuous in the calibration constraints (e.g if $h_B(x) =  1 $ for no $x$), then that constraint is still satisfied. In other words, if one bin is non-empty for one group while the corresponding bin for the other group is empty, we assume that bin satisfies calibration. 

However, due to the effects of the bias model,
\begin{align}
& P_{x \sim A}(y=1 | h_{A}^{*} (x)=1 ) > P_{x \sim B}( y = 1 | h_{B}^{*} (x)=1)  \label{cal1} \\
& P_{x \sim A}(y=1 | h_{A}^{*} (x)=0 ) > P_{x \sim B}( y = 1 | h_{B}^{*} (x)=0) \label{cal2}
\end{align} }
Observe the effect of $\beta$ requires that in order to satisfy calibration, we need to decrease left-hand sides of Eq. \ref{cal1} and \ref{cal2}. 
However, we can only decrease the left-hand side of Eq. \ref{cal1} by adding mass to $h_A$ from the negative region of $h_{A}^*$.
Achieving equality of Eq. \ref{cal2} requires adding mass from the positive region of $h_{B}^{*}$ to the region classified as negative by $h_B$.

Equations \ref{calp} and \ref{caln} are respectively equivalent to requiring that
\begin{align}
& \frac{p(1-\eta)z_A + (1-p)\eta v_A }{pz_A + (1-p)v_A}  = \frac{p\frac{(1-\eta)\beta}{(1-\eta)\beta +\eta} z_B + (1-p)\frac{\beta \eta}{\beta \eta + (1-\eta)} v_B}{p  z_B + (1-p)v_B} \label{poscalib}\\
& \frac{p(1-\eta)(1-z_A) + (1-p)\eta(1-v_A) }{p(1-z_A) + (1-p)(1-v_A)} = \frac{p\frac{(1-\eta) \beta}{(1-\eta)\beta + \eta} (1-z_B) + (1-p)\frac{\eta \beta}{\eta \beta + (1-\eta)} (1-v_B)}{p (1-z_B)+ (1-p)  (1-v_B)} \label{negcalib} 
\end{align}

We first consider Eq. \ref{poscalib}. 
Lemma \ref{caliblem} allows us to assume that the optimal biased data ERM classifier $h= (h_A, h_B)$ that satisfies Calibration has parameters $z_A = 1$ and $v_A > 0$ while $z_B > 0$ and $v_B = 0$ 
\begin{lem} \label{caliblem}
Assume $h=(h_A,h_B)$ is an non-trivial ERM solution on the biased data that also satisfies Calibration (Equations \ref{cal1} and \ref{cal2}) on the biased data (Under-Representation Bias with parameter $\beta$ with $(1-\eta)\beta > \eta$ ).
Then, without loss of generality, we may assume that  $z_A =1, v_A > 0, z_B < 1$, and $v_B = 0 $. 
\end{lem}
We defer the proof of this lemma until after the complete proof of Theorem \ref{calibformal}.

Using that $z_A =1, v_A > 0$ and Eq. \ref{poscalib} simplifies to
\[ \frac{p(1-\eta) + (1-p) \eta v_A }{p+(1-p)v_A} = \frac{(1-\eta)\beta }{(1-\eta)\beta + \eta} \]
Rearranging, we obtain that 
\begin{align*}
 p [(1-\eta)^2 \beta + \eta (1- \eta) -(1-\eta)\beta] &= v_A (1-p) [ (1-\eta) \beta - \eta (1-\eta) \beta - \eta^2 ] \\
& v_A = \frac{p}{1-p} [\frac{ \beta(1-\eta)^2 + \eta (1-\eta) - (1-\eta) \beta}{ \beta (1-\eta)^2 - \eta^2 }]\\ 
& v_A =  \frac{p(1-\eta)}{1-p} [\frac{ \beta(1-\eta) + \eta -\beta}{ \beta (1-\eta)^2 - \eta^2 }]\\
& v_A =  \frac{p}{1-p} [\frac{(1-\eta) \eta (1-\beta)}{ \beta (1-\eta)^2 - \eta^2 }]
\end{align*}
Now we consider Eq. \ref{negcalib}.
A similar argument using $z_A =1$, $v_A >0 $,$z_B < 1$ and $v_B = 0 $ allows us to simplify to 
\[ \eta = \frac{p \frac{(1-\eta) \beta}{(1-\eta) \beta + \eta} (1-z_B) + (1-p)\frac{\eta \beta}{\eta \beta + (1-\eta)} }{p(1-z_B) + (1-p)} \]
Rearranging,
\begin{align*}
p(1-z_B) [ \frac{(1-\eta)\beta}{ (1-\eta) \beta + \eta} - \eta ] = (1-p) [ \eta - \frac{\eta \beta}{ \eta \beta + (1-\eta)}] \\
p(1- z_B) [ \frac{ (1-\eta)^2 \beta - \eta^2 }{  (1-\eta) \beta + \eta}] = (1-p) [ \frac{(1-\eta) \eta (1-\beta) }{ \eta \beta + (1-\eta)}]\\
(1-z_B) = \frac{1-p}{p} \frac{ [(1-\eta)\beta + \eta][(1-\eta) \eta (1-\beta)]}{[\eta \beta +(1-\eta)][(1-\eta)^2 \beta - \eta^2]} \\
(1-z_B) = \frac{(1-p)^2}{p^2} v_{A} \frac{(1-\eta)\beta + \eta }{\eta \beta + (1-\eta)} \\
z_B = 1 - \frac{(1-p)^2}{p^2} v_{A} \frac{(1-\eta)\beta + \eta }{\eta \beta + (1-\eta)}
\end{align*}

$1-z_{B}$ is the fraction of the positive region of $h_{B}^{*}$ that is classified incorrectly as negative.
Similarly, $v_A$ is the fraction of the negative region of $h_{B}^{*}$ that is classified incorrectly as positive.
True error of $h=(h_A,h_B)$ in excess of $h^*$ depends on $(1-r)v_A(1-p)$ and $r p (1-z_B)$, so combining these terms gives
the Theorem Statement. 

\textit{Note that if either $z_B$ or $v_A$ is not in $[0,1]$, then the only Calibration satisfying solution is the same as the next case, e.g. a trivial classifier.}

Now we consider the case when $(1-\eta)\beta < \eta$. 
The proof for this case is the same as Theorem \ref{calibinformal}. 
In summary, for Under-Representation Bias, for all values of $\beta$, calibration-constrained ERM is strictly worse in accuracy than plain ERM. 
\end{proof}

Now we show the proof of the lemma that allows us to assume that when $(1-\eta)\beta > \eta$, then the Calibrated classifier on the biased data that minimizes biased error, $h=(h_A, h_B)$, has parameters $z_A =1, v_A > 0, z_B < 1$,and  $v_B = 0$.
\begin{proof}
We do this by assuming the hypothesis parameters satisfy Calibration 
and then modifying the hypothesis parameters in such a way that decreases error while maintaining Calibration until the parameters have the values
 $z_A =1, v_A > 0, z_B < 1, v_B = 0$.
 
First, we consider Group $A$.
Assume $z_A, v_A, z_B$, and $v_B$ satisfy Calibration with constants $c,d > 0$. 
We can assume this since we assumed $h=(h_A, h_B)$ is a non-trivial classifier. 

Then let $\Delta$ be the amount we increase $z_A$ by while satisfying calibration and $g(\Delta)$ be the term we add to $v_A$ to still
satisfy calibration. $g(\Delta)$ may be positive or negative. 
\begin{align*}
& P_{x \sim A}(y=1 | h_A (x)=1 )=\frac{p(1-\eta)(z_A+\Delta) + (1-p)\eta (v_A + g(\Delta)) }{p(z_A+\Delta) + (1-p)(v_A+g(\Delta))}  =c =P_{x \sim B}( y = 1 | h_B (x)=1)   \\
& P_{x \sim A}(y=1 | h_A (x)=0 )= \frac{p(1-\eta)(1-z_A-\Delta) + (1-p)\eta(1-v_A-g(\Delta)) }{p(1-z_A- \Delta) + (1-p)(1-v_A-g(\Delta))} = d = P_{x \sim B}( y = 1 | h_B (x)=0) 
    \end{align*}
Thus, 
\begin{align*}
& p(1-\eta)(z_A+\Delta) + (1-p)\eta (v_A+g(\Delta)) =c(p(z_A+\Delta) + (1-p)(v_A+g(\Delta)))    \\
& p(1-\eta)(1-z_A-\Delta) + (1-p)\eta(1-v_A-g(\Delta))=d(p(1-z_A-\Delta) + (1-p)(1-v_A-g(\Delta))) 
    \end{align*}
Adding these two equations together, we obtain for a constant $C$ ($C$ depends on terms not involving $\Delta$) that
\begin{align*}
C = d+(c-d)[p \Delta + (1-p) g(\Delta)]
\end{align*}
Thus in order to maintain calibration we need $g(\Delta) = -\frac{p}{1-p} \Delta$.

The change in Error in Group $A$ when we increase $z_A$ by $\Delta$ and decrease $v_A$ by $\Delta$ is:
\[ \Delta[p( \eta - (1-\eta) ) + (1-p)\frac{p}{1-p}*(\eta - (1-\eta) )] < 0 \] 
Since $0 < \eta < 1/2$, then this expression is negative, and so this modification using
$\Delta$  decreases the error in Group $A$. 

Assume we increase $z_A$ by $\Delta$ and decrease $v_A$ by $\Delta \frac{p}{1-p}$. 
We continue this modification process until we hit the boundary of the domain of $z_A$ or $v_A$. 

Either $z_A=1$ first and the process stops or $v_A=0$ and the process stops.
Assume we are in the second case. Since $v_A=0$ and $z_A > 0$, then the left hand side Equation \ref{poscalib} is $(1-\eta)$. 
However, if this is true then we cannot satisfy Calibration since there is no way to have a region of Group $B$ input space that achieves this Calibration value of $(1-\eta)$, due to the bias effects, so we must be violating Calibration.

Recall that the starting parameters $z_A, v_A$ satisifed Calibration and we picked $\Delta$ and $g(Delta)$ to maintain Calibration during the
modification process.
Thus we must be in the first case and $z_A=1$ and $v_A > 0$.

Now we consider Group $B$.
Analogously to the argument for Group $A$, in order to maintain Calibration for Group $B$, while holding the parameters for Group $A$ fixed,
we increase $z_B$ by $\Delta$ and decrease $v_B$ by $\Delta \frac{p}{1-p}$. 
Just like before, this will change error in Group $B$ by this amount:
\begin{align*}
    & \Delta [p   (\frac{\eta}{\eta + (1-\eta) \beta} - \frac{(1-\eta)\beta}{ \eta + (1-\eta) \beta } ) 
    + (1-p) \frac{p}{1-p}( \frac{\eta \beta}{(1-\eta) + \eta \beta } - \frac{ (1-\eta) }{(1-\eta) + \eta \beta })]
\end{align*}
By the assumption that $(1-\eta)\beta > \eta$, the expression above is negative so increasing $\Delta$ decreases error in Group $B$.

Where the Group $B$ analysis diverges is in the case analysis. 
If we are in the first case and $z_B =1 $ and $v_B > 0$ then the right hand side of Equation \ref{negcalib} is
$\eta \beta$, but this would violate Calibration since no mixing of positive  and negative input space from Group $A$ could reach this low value. 
Thus we are in the second case and $z_B > 0$ and $v_B = 0$.
\end{proof}

\textcolor{red}{------END NEW! -------- Check this last comment about approx. Concern is the argument about $v_A,z_A$}

}
\section{Conclusion}
 In this paper we have shown that Equal Opportunity constrained ERM will recover from several forms of training data bias, including Under-Representation Bias (where positive and/or negative examples of the disadvantaged group show up in the training data at a lower rate than their true prevalence in the population) and Labeling Bias (where each positive example from the disadvantaged group is mislabeled as negative with probability $\nu\in (0,1)$), in a clean model where the Bayes-Optimal classifiers $h_A^*,h_B^*$ satisfy most fairness constraints on the {\em true} distribution and the errors of $h_A^*,h_B^*$ are uniformly distributed. 
 
 We also show that a simpler reweighting approach succeeds in some but not all of our models.  This approach can be viewed as reweighting the training data to satisfy Demographic Parity, and then running an unconstrained ERM on the reweighted data (which is different from placing a Demographic Parity constraint on ERM on the actual training data, which does {\em not} work in our bias models). 
 
 Troublingly, we observe that enforcing calibration harms the very group we intended to aid 
 and results in substantially lowered accuracy across both groups, 
 even when the bias
 is small enough that normal ERM would work well.
 This points to more general issues with the class of fairness criteria called outcome tests (of which calibration is one variety) as also observed in \citep{infra}.
 
One limitation of our results is that we have used a stylized model for the generation of labels and how the bias enters the data-set.
However, we believe our results provide
useful insight into how fairness interventions can aid in reducing errors caused by bias in training data.

Even in this simple model, we observe separations between the fairness interventions 
and note that even when 
the fair solution is the right hypothesis in terms of both true accuracy and fairness, 
the fairness interventions can be tricked by the bias in the data.

In our bias models, we observe starkly different behavior of Equal Opportunity and Equalized Odds, two closely related fairness notions, when used to constrain ERM.
This sharp separation recommends that we closely align diagnosing a fairness concern with selecting an intervention, rather than looking towards universal solutions. 
In particular, biased data concerns like those we model in this paper, appear to be both prevalent 
\citep{bertrand2004emily} and difficult to recognize. 

The high-level message of this paper is that fairness interventions need not be in competition with accuracy and may improve classification accuracy if training data is unrepresentative or biased; however these results will be connected to the true data distributions and features of the biased data-generation process. 

\textbf{Acknowledgements} \\
We would like to thank Jon Kleinberg and Manish Raghavan for their helpful and insightful comments on this manuscript. 
%


\bibliography{mybib.bib}{}
\bibliographystyle{plainnat}

\newpage


\comment{
\subsection{Verification Re-Weighting Works for the Labeling Bias Model}  \label{reweightarg}

The way we intervene by Reweighting is we multiply the loss term for mis-classifying positive examples in $B$ by a factor $C$ 
and we multiply the loss term for misclassifying negative examples in group $B$ by $D$.
The goal of this reweighting is that the ratio of positive to negative samples in the regions above 
the hyper-plane is greater than $1$ while the ratio is less than $1$ in the region below the hyperplane.

Thus we need to simultaneously satisfy re-weighted probabilities need to satisfy 
\begin{align*}
& \frac{P(y=1|h^*(x)=1)}{P(y=0 | h^*(x) =1 ) } = \frac{C*[(1-\eta )(1-\nu )] }{ D(\eta + (1-\eta)\nu)}  > 1   \\
& \frac{P(y=1 |h^*(x)= 0 )}{P(y= | h^*(x) = 0  ) } = \frac{C*[\eta (1-\nu)]}{D((1-\eta) + \eta \nu)} < 1\\
\end{align*}
We can replace $C/D$ with $Z:= C/D$ and then the two constraints are equivalent to requiring that 

\begin{align} \label{ilfweight}
& \frac{\eta + (1-\eta) \nu }{ (1-\eta)(1-\nu)} < Z < \frac{1-\eta + \eta \nu }{\eta (1-\nu)} 
\end{align} 
To check that we can satisfy both these constraints, first we verify that the right hand side of Equation \ref{ilfweight} is non-negative.
$ \frac{(1-\eta) (1-\nu) }{ \eta + (1-\eta) \nu } < 0 $ since $ Z > 0$. 
However since $\eta  \in (0,1/2)$ and $\nu \in [0,1]$ as long as the 
right hand side is well defined (i.e. $\eta  \neq 0$) and $\nu \neq 1$ then this requirement is satisfied.
Note that $Z$ from above i.e. $Z = \frac{1-P_{A,+}(1-\nu)}{(1-\nu)(1-P_{A,+})}$ satisfies these constraints-this is consistent with our figures.
\begin{align*}
&  \frac{1-\eta + \eta \nu }{\eta (1-\nu)}  > \frac{\eta + (1-\eta) \nu }{ (1-\eta)(1-\nu)} 
\end{align*}

\begin{align*}
  \frac{1-\eta + \eta \nu }{\eta (1-\nu)} & > \frac{\eta + (1-\eta) \nu }{ (1-\eta)(1-\nu)} \\
   ((1-\eta) + \eta \nu )(1-\eta) & > \eta (\eta + (1-\eta)\nu ) \\
    1-\eta + \eta \nu - \eta +\eta^{2} - \eta^2 \nu & > \eta^2 + \eta \nu - \eta^{2} \nu \\
 1- 2 \eta + \eta^2 & > \eta^2 \\
 1- 2 \eta & > 0 
\end{align*}
This always holds so we can always find the correct $Z$ satisfying both constraints. 
Thus to find the optimal $Z$ we can simply increment $Z$ from $0$ until the classifier snaps out of 
saying $0$ on all samples from the dis-advantaged group and before it snaps into saying $1$ for all
samples on the advantaged group. 
}
\comment{
\begin{proof}
To show the result we simply need to show that the overall error expression is strictly monotone decreasing in $\Delta$.

We first show the case when $p_1 \neq 0$. 
\begin{align*}
& error_{\mathcal{D}_{A}}(h_A) = (1-p)\eta + (p-\Delta) \eta + \Delta (1-\eta) = \eta + \Delta (1-2\eta)\\ \\
&  error_{\mathcal{D}_{B}}(h_B) =  (1-p)(\eta \beta_{POS} (1-\nu)) + \Delta ((1-\eta)\beta_{POS}(1-\nu)) + (p-\Delta) (\eta \beta_{NEG} + (1-\eta) \beta_{POS} \nu) \\
&  error_{\mathcal{D}_{B}}(h_B) =  (1-p)(\eta \beta_{POS} (1-\nu)) +p (\eta \beta_{NEG} + (1-\eta) \beta_{POS} \nu) \\
& + \Delta ((1-\eta)\beta_{POS}(1-\nu) - \eta \beta_{NEG} - (1-\eta) \beta_{POS} \nu)
\end{align*}
Let $h=(h_A, h_B)$ where we use the corresponding classifier for samples from Group $A$ and Group $B$.
Thus the overall error is 
\begin{align*}
& error(h_A, h_B) = P(h(x) \neq y)= (1-\hat{r}) \cdot  error_{\mathcal{D}_{B}}(h_B) + \hat{r} \cdot  error_{\mathcal{D}_{B}}(h_B)
\end{align*}

The $\Delta$ term in the sum above has the coefficient 
\begin{align*}
& (1-\hat{r})(1-2\eta) + \hat{r} ( (1-\eta)\beta_{POS} (1-\nu) - (1-\eta)\beta_{POS}\nu - \eta \beta_{NEG})
\end{align*}
Clearly if the coefficient of $\Delta$ is positive then the error is minimized when $\Delta=0$.
That this coefficient is positive is simply our original assumption.
Sending $\Delta$ to zero reduces error the candidate classifier $h^*$, which we already showed satisfies equal opportunity. 

Now we consider the case where $p_{2A}=p_{2B}=\Delta$ and $p_{1A}= p_{2A} = 0$
\begin{align*}
& error_{\mathcal{D}_{A}}(h_A) = \eta + \Delta (1-2\eta) \\
&  error_{\mathcal{D}_{B}}(h_B) =  p( \eta \beta_{NEG}+(1-\eta)\beta_{POS}\nu) + \Delta((1-\eta)\beta_{NEG} + \eta \beta_{POS} \nu) + 
(1-p-\Delta)(\eta \beta_{+} (1-\nu)
\end{align*}

The terms involving $\Delta$ in the overall sum are
\[ (1-\hat{r})(1-2\eta) + \hat{r} ((1-\eta) \beta_{NEG} -(1-2\nu ) \eta \beta_{POS}) \]

Thus we have shown for any classifier satisfying equal opportunity, in two steps we can reduce strictly its error on
the biased distribution and still satisfy Equal Opportunity by transformation to $h^*$.
Thus $h^*$ has the lowest error of any classifier on the biased training data. 
\end{proof}
}

\comment{

\begin{align*}
& \frac{P(y=1 | h^* (x) = 1 ) }{ P(y=0 | h^*(x) = 1 } < 1 \\
&  \frac{ (1-\eta_1)(1-\eta_2) }{\eta_1 + (1-\eta_1)\eta_2} < 1 \\
&   1-\eta_2-\eta_1 + \eta_1 \eta_2 < \eta_1 +\eta_2 - \eta_1 \eta_2  \\
&   1- 2(\eta_1 + \eta_2 - \eta_1 \eta_2 ) < 0 \\
& \frac{1}{2} < \eta_1 + \eta_2 - \eta_1 \eta_2= \eta_2 + \eta_1 ( 1 - \eta_2) 
\end{align*}
This is not a very restrictive condition. For instance if $\eta_2 = 1/4$  
we merely need that $\eta_1 > 1/3$. 
Here we plot this region
\begin{figure}[H]
  \centering
  \includegraphics[width=0.5\linewidth]{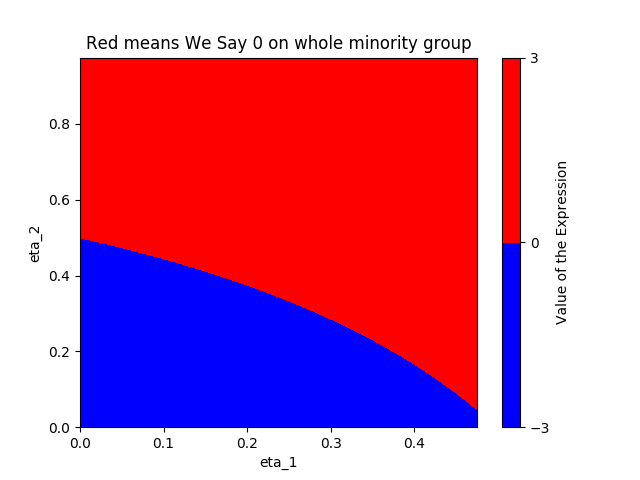}
  \caption{Region where $ILF(\eta_1, \eta_2)$ tricks the ERM classifier with}
\end{figure}

\subsection{Discussion of Region of Validity}
We have proven our result subject to $(1-r)(1-2\eta)+r*((1-\eta)*\beta - \eta) \geq 0$. 
The following plot shows the region where this constraint holds.

\begin{figure}[H]
  \centering
  \includegraphics[width=0.75\linewidth]{Equal Opportunityregion.png}
  \caption{Region where Equal Opportunity \textit{does} recover the Bayes Optimal Classifier. Not red indicates}
\end{figure}

\begin{figure}[H]
\centering
\begin{minipage}{.5\textwidth}
  \centering
  \includegraphics[width=0.75\linewidth]{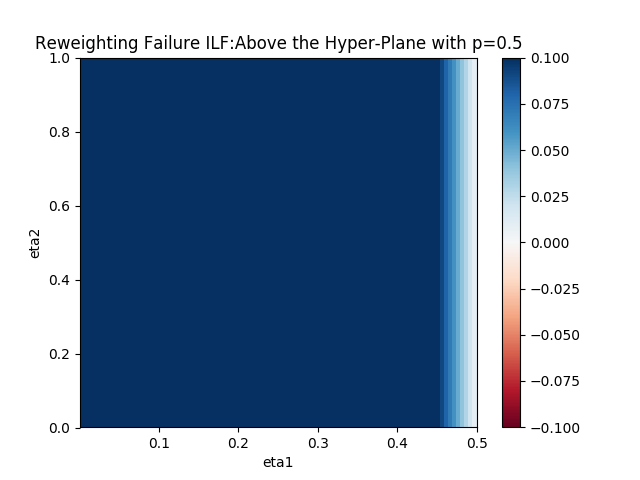}
  \label{fig:test1}
\end{minipage}%
\begin{minipage}{.5\textwidth}
  \centering
  \includegraphics[width=0.75\linewidth]{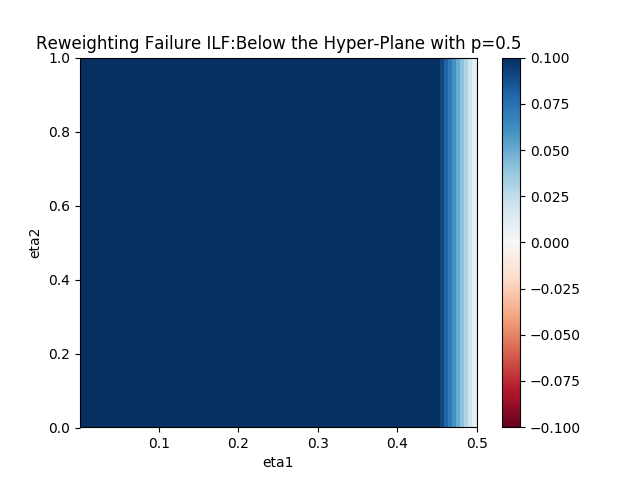}
  \label{fig:test2}
\end{minipage}
\caption{This is a contour plot where the color of the point of the plot indicates the color of the difference of expected positives and expected negatives for the left hand and plot and the difference of 
expected negatveies and expected positives on the right hand plot. If both of these values are positive then the re-weighting intervention recovers the Baye's Optimal classifier. 
As can be seen from these plots nowhere does the sign flip from positive to negative. 
There are regions that approach zero from above but this is becomes a sample complexity issue}.
\end{figure}

\begin{lem}
Note that $h^*$ satisfies equal opportunity on the biased data.
\end{lem}

\begin{proof}
Equal opporunity requires that the true positive rate be the same in both groups.
The true positive of $h_{A}^{*}$ is $P(\hat{Y}=1| a=A, Y=1) = \frac{p(1-\eta_1)}{p(1-\eta_1) + (1-p)\eta_1}$ while the true positive rate for $B$ is 
 $P(\hat{Y}=1| a=A, Y=1) = \frac{p(1-\eta_1)(1-\eta_2) }{p(1-\eta_1)(1-\eta_2) + (1-p)\eta_1(1-\eta_2)}=  P(\hat{Y}=1| a=A, Y=1) = \frac{p(1-\eta_1)}{p(1-\eta_1) + (1-p)\eta_1}$
\end{proof}

\section{ILF  PRIME-PRIME-PROBABLY pointless ignore. }
Another method that may capture the availability heuristic better is that when a negative sample is observed, 
with some probability the sample is duplicated in the training data set.

This is well motivated in that if there is some latent bias in an individual, seeing true negative samples from a dis-advantaged group has a stronger effect and is generalized 
across the entire group. 
This model seems to capture some of the core of predictive policing bias. 
First we try to fix this model with a re-weighting.

\begin{align*}
& \frac{P_A }{ P_{B} } = \frac{X p_{A} }{P_{B}*2*eta_2} \\
& \frac{P_{B}*2*\eta_2}{P_{B} } = X \\
& 2\eta_2 = X 
\end{align*} 

\begin{figure}[H]
\centering
\begin{minipage}{.5\textwidth}
  \centering
  \includegraphics[width=0.5\linewidth]{ilfabove05.png}
  \label{fig:test1}
\end{minipage}%
\begin{minipage}{.5\textwidth}
  \centering
  \includegraphics[width=0.5\linewidth]{ilfbelow05.png}
  \label{fig:test2}
\end{minipage}
\end{figure}

}
\end{document}